\documentclass{article}

\usepackage[preprint]{mystyle}

\usepackage[utf8]{inputenc} % allow utf-8 input
\usepackage[T1]{fontenc}    % use 8-bit T1 fonts
\usepackage{hyperref}       % hyperlinks
\usepackage{url}            % simple URL typesetting
\usepackage{amsfonts}       % blackboard math symbols
\usepackage{microtype}      % microtypography
\usepackage{xcolor}         % colors
\usepackage{tabularx,eucal}
\usepackage{array}

\hypersetup{
    colorlinks=true,
    linkcolor=blue!60,
    citecolor=blue!60,
    urlcolor=blue!60,
    filecolor=blue!60,
}

\usepackage{amsmath,amssymb,amsthm}
\usepackage{algorithm}
\usepackage{algorithmic}
\usepackage{graphicx}
\usepackage{booktabs}
\usepackage{colortbl,xcolor}
\usepackage{soul}
\usepackage{multirow}
\usepackage{enumitem}

\colorlet{llgray}{lightgray!40}
\sethlcolor{llgray}

\newtheorem{theorem}{Theorem}

\newtheorem{proposition}[theorem]{Proposition}

\title{MARBLE: Multi-Aspect Reward Balance for Diffusion RL}

\newcommand{\ours}{{\sc Marble}}
\newcommand{\oursbf}{{\textbf{\textsc{Marble}}}}

\author{Canyu Zhao$^{1}$\thanks{Work done during an internship at HiThink.}
~~
Hao Chen$^{1}$
~~
Yunze Tong$^{1}$
~~
Yu Qiao$^{2}$
~~
Jiacheng Li$^{2}$
~~
Chunhua Shen$^{1,3}$
\\
$^1$ Zhejiang University~~
$^2$ HiThink~~
$^3$ Zhejiang University of Technology
}

\begin{document}

\maketitle

\begin{abstract}
Reinforcement learning (RL) fine-tuning has become the dominant approach for aligning diffusion models with human preferences. However, assessing images is intrinsically a multi-dimensional task, and multiple evaluation criteria need to be optimized simultaneously. Existing practice deal with multiple rewards by training one specialist model per reward, optimizing a weighted-sum reward $R(x){=}\sum_k w_k R_k(x)$, or sequentially fine-tuning with a hand-crafted stage schedule. These approaches either fail to produce a unified model that can be jointly trained on all rewards or necessitates heavy manually tuned sequential training.
We find that the failure stems from using a naive weighted-sum reward aggregation. This approach suffers from a sample-level mismatch because most rollouts are \textit{specialist samples}, highly informative for certain reward dimensions but can be irrelevant for others; consequently, weighted summation dilutes their supervision.
To 
address this issue, we propose \oursbf{} (\textbf{M}ulti-\textbf{A}spect \textbf{R}eward \textbf{B}a\textbf{L}anc\textbf{E}), a gradient-space 
optimization 
framework that maintains  independent advantage estimators for each reward, computes per-reward policy gradients, and harmonizes them into a single update direction without manually-tuned  reward weighting, by solving a Quadratic Programming problem. We further propose an amortized formulation that exploits the affine structure of the loss used in DiffusionNFT,  
to reduce the per-step cost from $K{+}1$ backward passes to near single-reward baseline cost, together with EMA smoothing on the balancing coefficients to stabilize updates against transient single-batch  fluctuations. On SD3.5 Medium with five rewards, \ours{} improves all five reward dimensions \textit{simultaneously}, turns the worst-aligned reward's gradient cosine from negative under weighted summation in $80\%$ of mini-batches to consistently positive, and runs at $0.97\times$ the training speed of baseline training. Homepage and code repo: \href{http://aim-uofa.github.io/MARBLE}{HERE}. 
\end{abstract}

\begin{figure}[t]
    \centering
    \includegraphics[width=\textwidth]{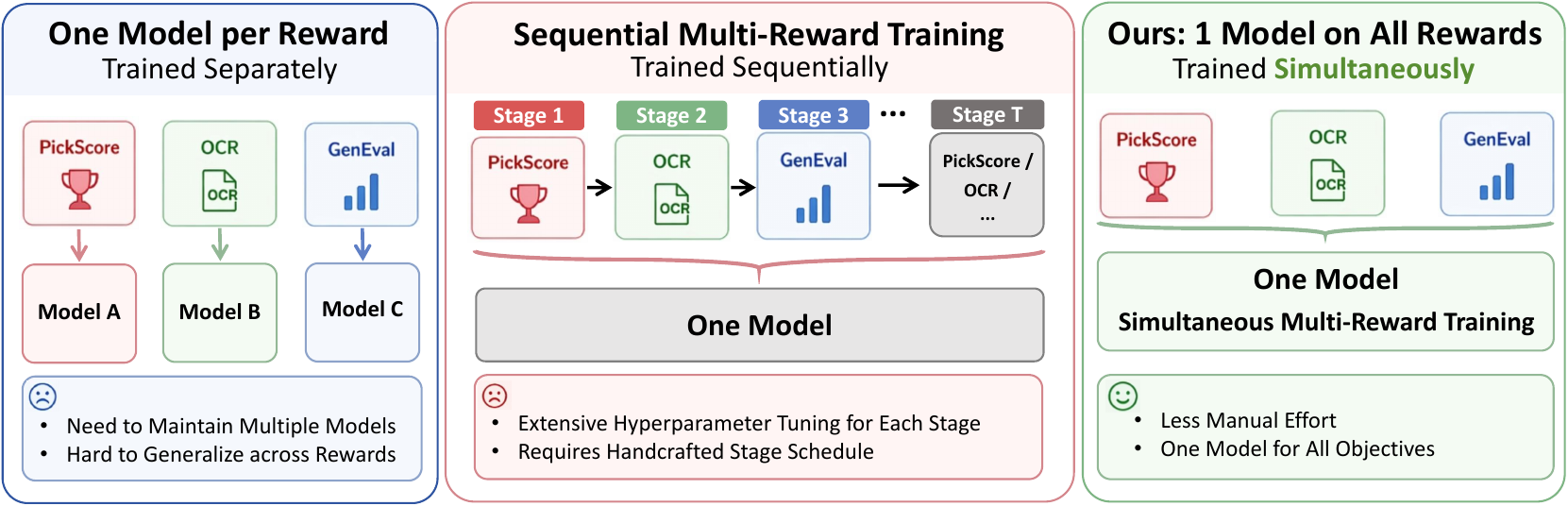}
    \vspace{-3mm}
    \caption{Comparison of multi-reward training paradigms. \textbf{Left:} Training one model per reward requires maintaining multiple models and cannot generalize across reward dimensions. \textbf{Middle:} Sequential multi-reward training produces a single model but demands extensive hyperparameter tuning and handcrafted stage schedules. \textbf{Right:} \ours{} trains a single model on all rewards simultaneously with minimal manual effort.}
    \label{fig:teaser}
    \vspace{-4mm}
\end{figure}

\section{Introduction}
\label{sec:intro}

Reinforcement learning (RL) fine-tuning has emerged as the dominant paradigm for aligning diffusion model outputs with human preferences, yielding notable improvements in aesthetic quality, text-image alignment, and compositional accuracy~\citep{liu2025flow,zhang2026op,tong2026alleviating,zheng2026DiffusionNFT}. In practice, however, generation quality is inherently \emph{multi-dimensional}. A high-quality image should simultaneously exhibit aesthetic appeal, faithfulness to the text prompt, and fine-grained correctness such as accurate text rendering and coherent object placement. These aspects are difficult to optimize jointly. 
Existing methods typically optimize a separate model for each individual reward~\citep{liu2025flow,zhang2026op,tong2026alleviating},
or sequentially fine-tune a single model on different reward datasets
\citep{zheng2026DiffusionNFT}.
However, the former does not yield a unified model, while the latter relies on substantial manual effort in designing the training schedule and hyperparameters. For example, DiffusionNFT~\citep{zheng2026DiffusionNFT} uses a hand-crafted sequence of stages: 800 iterations on reward 1, followed by 300 iterations on reward 2; 200 iterations on reward 1; 200 iterations on reward 2, and finally 100 iterations on reward 3, which requires substantial manual tuning and suffer from forgetting previously acquired rewards.

Therefore, \textit{the central challenge lies in developing a principled approach to conveniently and effectively optimize a single model across multiple reward objectives while eliminating heuristic manual tuning.}
A natural approach to multi-reward optimization is to combine all reward signals into a single scalar objective, typically via a weighted sum
$R(x)=\sum_k w_k R_k(x)$.  However, in practice, directly optimizing a diffusion model with this naively aggregated reward often results in performance degradation rather than improvement.
We trace the failure of scalar aggregation to a sample-level mismatch that we call the \emph{specialist sample} phenomenon (Figure~\ref{fig:specialist}). Many rollouts are informative for only a part of reward dimensions and uninformative or even inapplicable for the rest. For example, an image of a cat carries no signal for OCR-related rewards, and a generation with strong text rendering may be only average aesthetically. Under $R(x){=}\sum_k w_k R_k(x)$, the value of such a sample is diluted by the unrelated dimensions, and the resulting advantage no longer reflects the dimension on which the sample is genuinely useful. We further empirically confirm this dilution at the gradient level (Section~\ref{sec:failure}): the weighted-sum update direction is anti-aligned with single reward gradient, meaning the update actively pushes against some reward most of the time. 

To address this problem, we propose \oursbf{}, a gradient-space reward balance framework that preserves reward-specific supervision throughout optimization. Rather than collapsing rewards into a scalar, \ours{} maintains an independent advantage estimator per reward so that each sample is credited precisely on the dimensions for which it is informative, computes per-reward policy gradients, normalizes them to remove scale disparities, and harmonizes them into a single update direction. 
To ensure scalability during training, we develop an amortized formulation that leverages the affine structure of the DiffusionNFT loss, thereby reducing the per-step computational cost to nearly that of a single-reward baseline.
Also, we apply EMA smoothing on the balancing coefficients so that certain rewards are not transiently silenced when a single mini-batch happens to carry weak signal for them.
In summary, our contributions are:
\begin{itemize}
    \item \textbf{We characterize the \emph{specialist sample} problem in multi-reward diffusion RL.} Across rollouts on SD3.5 Medium, weighted-sum aggregation produces an update direction that is anti-aligned with at least one reward's gradient in $80\%$ of mini-batches, formally quantifying why scalar reward aggregation fails when reward signals are sample-sparse.

    \item \textbf{We propose \oursbf{}, a gradient-space reward balancing framework.} \ours{} combines (i) per-reward advantage decomposition with normalize-and-rescale gradient harmonization, (ii) an amortized variant that reduces multi-reward training cost to near a single-reward baseline by exploiting the affine structure of the DiffusionNFT loss, and (iii) EMA coefficient smoothing that stabilizes amortized balancing weights against transient single-batch fluctuations.

    \item \textbf{\ours{} simultaneously improves all rewards with a single model.} To the best of our knowledge, we are the first to address reward balancing in multi-reward diffusion RL. We believe \ours{} provides a useful foundation for future work on scalable multi-objective alignment of generative models.

\end{itemize}

\begin{figure}[tb!]
    \centering
    \includegraphics[width=\linewidth]{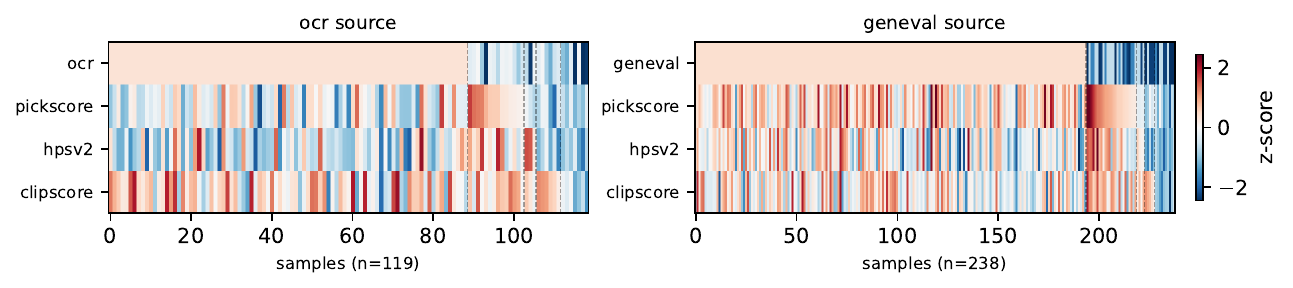}
    \vspace{-5mm}
    \caption{\textbf{Sample-level specialist structure.}
Each column denotes one sample, and each row shows its per-reward $z$-score advantage $A_k(x)$. High advantages are concentrated on source-specific rewards such as OCR or GenEval. Few samples achieve positive rewards across all dimensions.}
    \label{fig:specialist}
    \vspace{-5mm}
\end{figure}

\section{Related Work}
\label{sec:related}

\subsection{Reinforcement Learning for Diffusion Models}
\label{sec:related_rl}

Diffusion models~\citep{ho2020denoising, song2020score, song2020denoising} have become the dominant paradigm for high-fidelity image generation. Latent diffusion~\citep{rombach2022high,podell2023sdxl} moved the generation process into a compressed latent space, enabling efficient high-resolution synthesis, while subsequent scaling efforts~\citep{esser2024scaling} further improved generation quality by combining rectified flow formulations~\citep{liu2022flow,lipman2022flow} with transformer-based architectures~\citep{peebles2023scalable}. Diffusion models have since been extended far beyond text-to-image generation to a wide range of generative tasks, including image customization~\citep{zhang2023adding,tan2025ominicontrol,mou2024t2i,ye2023ip}, image editing~\citep{brooks2023instructpix2pix,batifol2025flux,wu2025qwenimagetechnicalreport,wang2026geometry}, video editing~\citep{jiang2025vace,zhao2025tinker}, image understanding~\citep{zhao2025diception,visionbanana2026} and even long-form video and movie generation~\citep{zhao2024moviedreamer,huang2025self,li2025stable,xiao2025captain}. 

Reinforcement Learning~\citep{schulman2017proximal,rafailov2023direct} has emerged as a primary approach for aligning models with human preferences. In diffusion RL, a reward model evaluates each generated sample, and the diffusion policy is optimized to maximize expected reward while remaining close to a pre-trained reference model~\citep{black2023training,fan2023dpok,tong2025noise}. Early work mainly relied on policy-gradient-based methods~\citep{black2023training,fan2023dpok}. More recently, inspired by the success of GRPO~\citep{shao2024deepseekmath} in large language models, a growing body of work has adapted similar ideas to diffusion models~\citep{liu2025flow,tong2026alleviating,xue2025dancegrpo,he2025tempflow,zhang2026op,li2025mixgrpo}, achieving stronger empirical performance. Recent work such as DiffusionNFT~\citep{zheng2026DiffusionNFT} has further improved training efficiency. Despite these advances, existing diffusion RL methods largely optimize a single scalar reward. When multiple reward signals are available, practitioners typically either train separate models for different rewards, fine-tune sequentially on different datasets, or combine several rewards through a weighted sum. None of these strategies provides a principled way to jointly optimize multiple quality dimensions within a single training run without manual reward weighting.

\subsection{Multi-Task Learning}
\label{sec:related_mtl}

Multi-task learning~\citep{deb2011multi,desideri2012multiple,sener2018multi,yu2020gradient,liu2021conflict,navon2022multi,liu2024stochastic} trains a shared model over multiple objectives and faces a closely related challenge: inter-task gradient interference can cause a single update to improve some objectives while harming others. To address this issue, prior work has developed a range of gradient-level optimization strategies, including finding the minimum-norm point in the convex hull of per-task gradients~\citep{desideri2012multiple,sener2018multi}, projecting out destructive gradient components~\citep{yu2020gradient}, maximizing worst-case per-task improvement~\citep{liu2021conflict}, and formulating gradient balancing as a game-theoretic bargaining problem~\citep{navon2022multi}. These methods share a common principle: resolving interactions among objectives in gradient space rather than loss space. They have proven effective in supervised multi-task settings, particularly for jointly learning multiple vision tasks.

RL~\citep{zhu2025active,zhong2025omni,shao2024deepseekmath} and Multi-reward alignment~\citep{zhou2024beyond,rame2023rewarded,shi2024decoding} has also received growing attention in Large Language Model. However, to the best of our knowledge, there has been little attempt to address the corresponding problem in diffusion RL. \ours{} bridges this gap by adapting gradient harmonization to the diffusion RL setting, with per-reward advantage decomposition and scale-aware gradient balancing tailored to the diffusion training objective.
\section{Method}
\label{sec:method}

\begin{figure}[t]
    \centering
    \includegraphics[width=\textwidth]{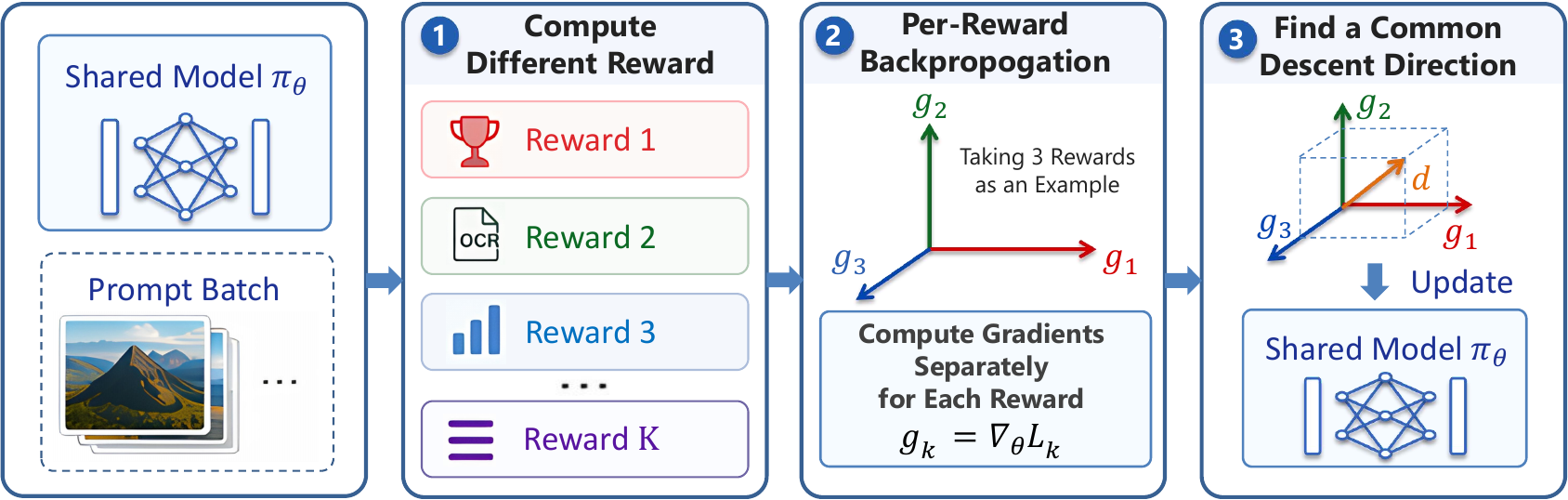}
    \caption{Overview of \ours{}. Given a prompt batch, the shared model $\pi_\theta$ generates images that are scored by $K$ reward models independently. Per-reward policy gradients $g_k = \nabla_\theta \mathcal{L}_k$ are computed via separate backpropagation passes. The gradient harmonization finds a common descent direction $d$ that balances all reward objectives and the shared model is updated accordingly.}
    \label{fig:method}
\end{figure}

\subsection{Preliminaries: DiffusionNFT}
% \citep{zheng2026DiffusionNFT}}
\label{sec:prelim}

Let $\pi_\theta$ denote a diffusion model parameterized by $\theta$, and let $\pi_{\mathrm{ref}}$ denote the frozen pre-trained reference policy. Given a single reward function $R: \mathcal{X} \to \mathbb{R}$, diffusion RL optimizes
\begin{equation}
    \max_\theta \; \mathbb{E}_{x \sim \pi_\theta}[R(x)] - \beta_{\mathrm{KL}} \cdot D_{\mathrm{KL}}(\pi_\theta \| \pi_{\mathrm{ref}}),
    \label{eq:single_reward}
\end{equation}
where $\beta_{\mathrm{KL}}$ controls the regularization strength. We build on DiffusionNFT \citep{zheng2026DiffusionNFT}, which implements Equation  \eqref{eq:single_reward} through a noise-free training (NFT) loss. For a generated sample $x$ with advantage $A(x)$, the NFT loss interpolates between a positive term that moves the model toward better predictions and a negative term that pushes it away:
\begin{equation}
    \ell(\theta; x, t) = r \cdot \mathcal{L}^+(\theta) + (1-r) \cdot \mathcal{L}^-(\theta),
    \label{eq:nft_loss}
\end{equation}
where
$r = \operatorname{clamp}\,\!\bigl(\tfrac{1}{2} + \tfrac{A(x)}{2A_{\max}},\; 0,\; 1\bigr)$
maps the advantage to an interpolation coefficient. Here $\mathcal{L}^+(\theta) = \|v_\theta^+ - v\|^2$ and $\mathcal{L}^-(\theta) = \|v_\theta^- - v\|^2$ are velocity prediction losses, with $v_\theta^+ = (1-\beta)v^{\mathrm{old}} + \beta v_\theta$ and $v_\theta^- = (1+\beta)v^{\mathrm{old}} - \beta v_\theta$ constructed from the current policy $v_\theta$ and the reference policy $v^{\mathrm{old}}$, and $v$ denotes the ground-truth velocity target. A key structural property is that $\mathcal{L}^+$ and $\mathcal{L}^-$ depend only on $\theta$ and the current sample, and are therefore \emph{independent of the advantage value}. The advantage affects the loss only through the affine mapping to $r$.

When multiple rewards $\{R_k\}_{k=1}^K$ are available, the standard approach first aggregates them into a scalar reward $R(x)=\sum_k w_k R_k(x)$, then derives a single advantage $A(x)$ and applies Equation~\eqref{eq:nft_loss} with one interpolation coefficient $r$. As we show next, this scalarization obscures which reward dimensions each sample is actually informative for, leading to poorly aligned updates in multi-reward training.

\subsection{Why Scalar Reward Aggregation Fails}
\label{sec:failure}

The Introduction identifies specialist samples as the sample-level reason that scalar reward aggregation is unreliable; at the gradient level, the weighted-sum update has negative worst-reward alignment in $80\%$ of the measured mini-batches, whereas \ours{} keeps the worst-reward alignment positive in all measured mini-batches (Appendix~\ref{app:harmony_diagnostics}). This gradient-level diagnostic motivates the harmonization procedure introduced next.

\subsection{Multi-Reward Gradient Harmonization}
\label{sec:harmonization}

\paragraph{Per-reward advantage decomposition.}
To preserve reward-specific supervision, \ours{} decomposes the training signal along reward dimensions. Following DiffusionNFT, for each reward $R_k$, we maintain an independent advantage estimator that normalizes $R_k$ within prompt groups:
\begin{equation}
    A_k(x) = \frac{R_k(x) - \mu_k(\mathrm{prompt})}{\sigma_k(\mathrm{prompt}) + \varepsilon},
    \label{eq:advantage}
\end{equation}
where $\mu_k$ and $\sigma_k$ are the running mean and standard deviation of $R_k$ for the same text prompt. Each $A_k$ yields a separate interpolation coefficient $r_k \in [0,1]$, which defines a reward-specific NFT loss $\ell_k$ through Equation \eqref{eq:nft_loss}. A backward pass through $\ell_k$ produces the corresponding policy gradient
\begin{equation}
    g_k = \nabla_\theta \; \frac{1}{NT} \sum_{i=1}^N \sum_{t=1}^T \ell_k(\theta; x_i, t).
    \label{eq:per_grad}
\end{equation}
All $K$ gradients are computed on the same sampled batch; only the advantage signal differs across rewards. This decomposition allows each sample to be credited precisely on the dimensions for which it is informative, instead of forcing all information through a single aggregated advantage.

\paragraph{Gradient normalization and harmonization.}
Different reward models can induce gradients at 
%very
drastically 
different scales. To remove this scale disparity from the harmonization step, \ours{} first normalizes each gradient:
\begin{equation}
    \hat{g}_k = g_k / \|g_k\|.
    \label{eq:normalize}
\end{equation}
Given the normalized gradients $\{\hat{g}_k\}_{k=1}^K$, \ours{} computes a unified update direction by solving a convex quadratic program,  as previously shown in multi-task learning \citep{desideri2012multiple,sener2018multi}:
\begin{equation}
    \alpha^* = \arg\min_{\alpha \in \Delta^K} \left\| \sum_{k=1}^K \alpha_k \hat{g}_k \right\|^2,
    \label{eq:mgda}
\end{equation}
where $\Delta^K = \{\alpha \in \mathbb{R}^K_{\ge 0} : \sum_k \alpha_k = 1\}$ is the probability simplex. 
The solution gives a descent direction that improves all rewards as shown in \cite{desideri2012multiple}.
The resulting direction
$
d^* = \sum_{k=1}^K \alpha_k^* \hat{g}_k
$
is the minimum-norm point in the convex hull of the normalized gradients and provides a balanced compromise across reward dimensions. When rewards are already aligned, the solution concentrates on their shared direction; when rewards emphasize different aspects, the solver adaptively reweights them according to the current batch. 
% In practice, the QP has only $K$ variables (typically $\le 8$) and is solved in under 1\,ms with SLSQP algorithm.

\paragraph{Rescaling and KL-decoupled update.}
Because $d^*$ is computed from unit-normalized gradients, its magnitude no longer matches the scale expected by the optimizer or the KL schedule. We therefore restore the natural update scale by multiplying $d^*$ by the mean norm of the original gradients:
\begin{equation}
    d_{\mathrm{final}} = d^* \cdot \bar{n}, \qquad
    \bar{n} = \frac{1}{K} \sum_{k=1}^K \|g_k\|.
    \label{eq:rescale}
\end{equation}
This normalize-then-rescale procedure separates directional balancing from step-size calibration.
The final parameter update combines the rescaled reward gradient with KL regularization as a separate term:
\begin{equation}
    \theta \leftarrow \theta - \eta \Bigl(d_{\mathrm{final}} + \beta_{\mathrm{KL}} \cdot \nabla_\theta D_{\mathrm{KL}}(\pi_\theta \| \pi_{\mathrm{ref}})\Bigr).
    \label{eq:update}
\end{equation}
We treat KL regularization outside the harmonization solve because it plays a different role from reward optimization: reward gradients determine \emph{which} aspects to improve, while the KL term controls \emph{how far} the policy is allowed to deviate from the reference model.

\subsection{Amortized Gradient Harmonization}
\label{sec:amortized}

The full harmonization procedure requires $K{+}1$ backward passes per iteration ($K$ reward-specific passes plus one KL pass), which becomes expensive as the number of rewards grows. Moreover, solving for $\alpha^*$ at every step introduces additional variance, since the harmonization weights are estimated from a single mini-batch and may fluctuate considerably across iterations. We observe that this instability can lead to undesirable visual artifacts at later training stages, even when average reward scores continue to improve. This motivates an amortized variant that reduces both computational overhead and short-term weight fluctuation.

\paragraph{Scalarization equivalence.}
Recall from Equation \eqref{eq:nft_loss} that the NFT loss depends on the advantage only through the affine mapping $r = A/(2A_{\max}) + 1/2$, while $\mathcal{L}^+(\theta)$ and $\mathcal{L}^-(\theta)$ are independent of $A$. This yields the following exact equivalence.

\begin{proposition}
\label{prop:equiv}
Let $\alpha \in \Delta^K$ and let $A_1,\ldots,A_K$ be per-reward advantages with $|A_k| < A_{\max}$ for all $k$ and $\bigl|\sum_k \alpha_k A_k\bigr| < A_{\max}$. Define the combined advantage $\bar{A} = \sum_{k=1}^K \alpha_k A_k$. Then
\begin{equation}
    \nabla_\theta \ell(\theta; \bar{A})
    =
    \sum_{k=1}^K \alpha_k \nabla_\theta \ell_k(\theta; A_k).
    \label{eq:equiv}
\end{equation}
\end{proposition}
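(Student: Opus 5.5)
The argument is a direct computation that uses the affine dependence of the NFT loss on the advantage. The hypotheses $|A_k|<A_{\max}$ and $|\bar A|<A_{\max}$ ensure that the clamp in the definition of $r$ is inactive for every per-reward coefficient and for the combined one. So I would first write
\[
r_k = \tfrac12 + \tfrac{A_k}{2A_{\max}}, \qquad \bar r = \tfrac12 + \tfrac{\bar A}{2A_{\max}},
\]
with all of them in the open interval $(0,1)$. Since $\alpha\in\Delta^K$, so that $\sum_k\alpha_k=1$, linearity gives
\[
\sum_k \alpha_k r_k = \tfrac12\sum_k\alpha_k + \tfrac{1}{2A_{\max}}\sum_k\alpha_k A_k = \bar r .
\]
This step is where the simplex constraint is genuinely needed: it makes the constant term $\tfrac12$ survive unchanged.

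Next I would expand the per-sample losses using Equation~\eqref{eq:nft_loss}. Each reward-specific loss is $\ell_k(\theta;A_k)=r_k\mathcal{L}^+(\theta)+(1-r_k)\mathcal{L}^-(\theta)$, where $\mathcal{L}^\pm$ are the same functions of $\theta$ for all $k$. This is the structural property stated in Section~\ref{sec:prelim}: all $K$ losses are computed on the same sample $(x,t)$ with the same $v^{\mathrm{old}}$, $v$, and $\beta$. Taking the convex combination, and again using $\sum_k\alpha_k=1$ so that $\sum_k\alpha_k(1-r_k)=1-\bar r$, we get
\[
\sum_k\alpha_k\ell_k(\theta;A_k)=\bar r\,\mathcal{L}^+(\theta)+(1-\bar r)\mathcal{L}^-(\theta)=\ell(\theta;\bar A).
\]
So the identity already holds at the level of loss values, not only gradients.

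Finally I would differentiate in $\theta$. The coefficients $r_k$ and $\bar r$ depend only on the rewards, not on $\theta$; in practice they are computed from detached advantages of already generated samples. Gradient is linear, so Equation~\eqref{eq:equiv} follows termwise. The same argument extends to the batch-averaged loss of Equation~\eqref{eq:per_grad}, since averaging over $i$ and $t$ is also linear and the identity holds sample by sample.

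The only real subtlety is the clamp. Without the strict bounds, $\operatorname{clamp}$ is not affine, and the equality $\sum_k\alpha_k r_k=\bar r$ can fail. For example, one $A_k$ might saturate at $r_k=1$ while the convex combination does not. I would therefore state explicitly that the hypotheses are what keep every coefficient in the linear regime. I would also note that $|\bar A|<A_{\max}$ is in fact implied by the per-$k$ bounds via convexity, so it is harmless to assume it.
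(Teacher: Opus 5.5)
Your proof is correct and is essentially the paper's argument: the strict bounds keep the clamp inactive, so $r$ is affine in the advantage, the simplex constraint $\sum_k\alpha_k=1$ gives $\sum_k\alpha_k r_k=\bar r$, and linearity finishes the proof. The differences are minor. You prove the identity at the level of loss values before differentiating, whereas the paper works directly with gradients. You also correctly observe that the hypothesis $|\bar A|<A_{\max}$ already follows from the per-reward bounds by convexity.
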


\textit{Proof.}
Since each advantage $A_k$ is a fixed scalar independent of $\theta$,
\begin{equation}
\begin{split}
\nabla_\theta \ell_k
&= r_k \nabla_\theta \mathcal{L}^+ + (1-r_k)\nabla_\theta \mathcal{L}^- ,\\
\sum_k \alpha_k \nabla_\theta \ell_k
&= \left(\sum_k \alpha_k r_k\right)\nabla_\theta \mathcal{L}^+
 + \left(1-\sum_k \alpha_k r_k\right)\nabla_\theta \mathcal{L}^- ,
\end{split}
\end{equation}
where we used $\sum_k \alpha_k=1$. Because $r_k = \frac{A_k}{2A_{\max}} + \frac{1}{2}$,
\begin{equation}
\sum_k \alpha_k r_k
=
\sum_k \alpha_k \Bigl(\frac{A_k}{2A_{\max}} + \frac{1}{2}\Bigr)
=
\frac{\bar{A}}{2A_{\max}} + \frac{1}{2}
=
\bar{r}.
\end{equation}
Substituting $\bar{r}$ recovers $\nabla_\theta \ell(\theta;\bar{A})$. \textbf{This shows that, when the clamp is inactive, the convex combination of the per-reward NFT gradients can be recovered exactly by a single backward pass using the combined advantage $\bar{A}$.}
The equivalence relies on two properties: (i) the NFT loss depends on the advantage only through the \emph{affine} map $r = A/(2A_{\max}) + 1/2$, and (ii) the simplex constraint $\sum_k \alpha_k = 1$ preserves the constant offset under convex combination. The clamp in Equation~\ref{eq:nft_loss} introduces a bounded deviation only when $|A_k| \ge A_{\max}$. Following DiffusionNFT~\cite{zheng2026DiffusionNFT}, we set $A_{\max}=5$ during training, which serves as a loose safety bound. Empirically, we never observed the clamp being activated during training.

\paragraph{Amortized procedure.}
Proposition~\ref{prop:equiv} enables an efficient application of fixed reward-balancing coefficients through a single NFT backward pass. We emphasize that gradient normalization is used only when estimating the coefficients $\alpha^*$: solving Equation~\eqref{eq:mgda} with normalized gradients removes reward-dependent scale disparities and makes $\alpha^*$ reflect the directional conflict among reward objectives. In contrast, the amortized update applies the cached coefficients in the advantage space, because the exact single-backward equivalence holds for convex combinations of the NFT losses, or equivalently of the unnormalized per-reward gradients. Recovering a normalized-gradient combination at every amortized step would require per-reward gradient norms and thus defeat the purpose of amortization.

Therefore, every $N$ steps, we run the full harmonization procedure to refresh $\alpha^*$ from normalized gradient. During the intervening $N{-}1$ steps, we form $\bar{A}=\sum_k \alpha_k^* A_k$ using the cached coefficients and perform only one reward backward pass. This coefficient-amortized approximation retains the scale-invariant reward-balancing information estimated by full harmonization, while preserving the natural gradient scale of the current NFT loss and reducing the average per-step cost from $(K{+}1)\times$ to $(K+N)/N$ times that of a single-reward baseline.

\subsection{Coefficient Smoothing for Stable Amortization}
\label{sec:coef_smoothing}

While amortized harmonization reduces the computational cost of training, it also makes the optimization more sensitive to short-term fluctuations in the estimated balancing coefficients. In particular, we observe that some reward dimensions may receive little or no useful signal from a rollout batch, especially during the early stages of training. This often happens for specialist rewards that require precise compositional or spatial correctness: when none of the generated samples satisfies the corresponding constraint, the estimated gradient can become uninformative, and the harmonization solver may assign a near-zero coefficient to that reward. Under amortization, such a transient zero coefficient is then reused for the following $N{-}1$ steps, effectively suppressing that reward throughout the entire amortization window. This can slow down training and reduce final performance.

To improve the stability of amortized harmonization, we apply exponential moving average (EMA) smoothing to the balancing coefficients. Let $\alpha_t^*$ denote the coefficients obtained from the full harmonization step at iteration $t$. Instead of directly using $\alpha_t^*$ for the subsequent amortized updates, we maintain a smoothed coefficient vector 
$\bar{\alpha}_t$:
\begin{equation}
% $
    \bar{\alpha}_t = \rho \bar{\alpha}_{t-1} + (1-\rho)\alpha_t^*,
    \label{eq:coef_ema}
% $
\end{equation}
where $\rho$ is the EMA decay. Since both $\bar{\alpha}_{t-1}$ and $\alpha_t^*$ lie on the probability simplex, their convex combination also remains a valid simplex vector. We then use $\bar{\alpha}_t$ to construct the combined advantage during amortized updates:
$
    \bar{A} = \sum_{k=1}^K \bar{\alpha}_{t,k} A_k .
$
This smoothing mechanism prevents occasional rollout failures from completely removing a reward signal over an amortization window, while still allowing the coefficients to adapt to the gradient geometry estimated by the full harmonization step. In all experiments, we set the EMA decay to $\rho=0.7$. Empirically, coefficient smoothing improves both training efficiency and effectiveness.

\section{Experiments}
\label{sec:exp}

\subsection{Experimental Setup}
\label{sec:setup}

We build on Stable Diffusion 3.5 Medium~\citep{esser2024scaling} and fine-tune LoRA adapters~\citep{Hu2021LoRALA} with rank 32 and alpha 64 using the NFT loss in Equation~\ref{eq:nft_loss}. Unless otherwise specified, we use AdamW with a constant learning rate of $3 \times 10^{-4}$. Our training objective jointly optimizes five rewards: three general-purpose rewards, PickScore~\citep{kirstain2023pick}, HPSv2~\citep{wu2023human}, and CLIPScore~\citep{hessel2021clipscore}, and two specialist rewards, OCR accuracy and GenEval~\citep{ghosh2023geneval}. To assess transfer beyond the optimized rewards, we additionally report Aesthetic Score~\cite{schuhmann2022laion}, ImageReward~\citep{xu2023imagereward}, and UniReward~\citep{wang2025unified}, none of which are used during training. The model is trained with 16 NVIDIA H200 GPUs.
We compare against single-reward FlowGRPO specialists~\citep{liu2025flow}, each optimized for one reward, and two multi-reward DiffusionNFT variants~\citep{zheng2026DiffusionNFT}: sequential$^\dagger$, which follows a manually scheduled multi-stage training procedure, and simultaneous$^\ddagger$, which directly scalarizes all rewards. All methods are evaluated under the same framework for fair comparison.

\subsection{Main Results}
\label{sec:main_results}

\begin{table*}[t]
    \centering
    \renewcommand{\arraystretch}{1.1}
    \vspace{-2mm}
    \caption{\textbf{Main results.} Comparison of \ours{} with pre-trained diffusion models and RL fine-tuning methods. \ours{} jointly optimizes all in-domain rewards in a single run. $^\dagger$Sequential multi-stage training with hand-crafted curriculum. $^\ddagger$Simultaneous five-reward training with weighted-sum aggregation. \colorbox{llgray}{Gray} denotes in-domain reward used during training. \textbf{Bold} denotes best and \underline{underline} denotes second best. Composite is the per-row mean of column-wise z-scores (each metric standardized to zero mean and unit variance across the rows of this table); higher is better. Evaluated with the DiffusionNFT official code.}
    \vspace{1mm}
    \resizebox{\linewidth}{!}{
        \begin{tabular}{lccccccccc}
            \toprule
            \multirow{2}{*}{\textbf{Model}} 
            & \multicolumn{2}{c}{\textbf{Rule-Based}} 
            & \multicolumn{6}{c}{\textbf{Model-Based}} 
            & \multirow{2}{*}{\textbf{Composite} $\uparrow$} \\
            \cmidrule(lr){2-3} \cmidrule(lr){4-9}
            & \textbf{GenEval} & \textbf{OCR}  
            & \textbf{PickScore} & \textbf{CLIPScore} & \textbf{HPSv2.1} 
            & \textbf{Aesthetic} & \textbf{ImgRwd} & \textbf{UniRwd} 
            & \\
            \midrule
            SD-XL & 0.55 & 0.14 & 22.42 & 0.287 & 0.280 & 5.60 & 0.76 & 2.93 & -0.455 \\
            SD3.5-L & 0.71 & 0.68 & 22.91 & 0.289 & 0.288 & 5.50 & 0.96 & 3.25 & +0.116 \\
            FLUX.1-Dev & 0.66 & 0.59 & 22.84 & \textbf{0.295} & 0.274 & 5.71 & 0.96 & 3.27 & +0.104 \\
            \midrule
            SD3.5-M (w/o CFG) & 0.24 & 0.12 & 20.51 & 0.237 & 0.204 & 5.13 & -0.58 & 2.02 & -2.319 \\
            + CFG & 0.63 & 0.59 & 22.34 & 0.285 & 0.279 & 5.36 & 0.85 & 3.03 & -0.255 \\
            \quad+ FlowGRPO
            & \cellcolor{llgray}\textbf{0.95} & 0.66 & 22.51 & \underline{0.293} & 0.274 & 5.32 & 1.06 & 3.18 & +0.120 \\
            & 0.66 & \cellcolor{llgray}\underline{0.92} & 22.41 & 0.290 & 0.280 & 5.32 & 0.95 & 3.15 & +0.013 \\
            & 0.54 & 0.68 & \cellcolor{llgray}\underline{23.50} & 0.280 & 0.316 & 5.90 & 1.29 & 3.37 & +0.362 \\
            + DiffusionNFT$^\dagger$
            & \cellcolor{llgray}\underline{0.94} & \cellcolor{llgray}0.91 & \cellcolor{llgray}\textbf{23.80} & \cellcolor{llgray}\underline{0.293} & \cellcolor{llgray}\underline{0.331} & 6.01 & \underline{1.49} & \underline{3.49} & \underline{+1.015} \\
            + DiffusionNFT$^\ddagger$
            & \cellcolor{llgray}0.92 & \cellcolor{llgray}0.91 & \cellcolor{llgray}21.53 & \cellcolor{llgray}0.267 & \cellcolor{llgray}0.300 & \underline{6.15} & 1.16 & 3.04 & +0.184 \\
            + \oursbf{}
            & \cellcolor{llgray}\underline{0.94} & \cellcolor{llgray}\textbf{0.96} & \cellcolor{llgray}22.83 & \cellcolor{llgray}0.286 & \cellcolor{llgray}\textbf{0.355} & \textbf{6.59} & \textbf{1.53} & \textbf{3.52} & \textbf{+1.116} \\
            \bottomrule
        \end{tabular}
    }
    \vspace{-1mm}
    \label{tab:main}
\end{table*}
\paragraph{Performance.}

\begin{figure}[ht]
    \centering
    \includegraphics[width=\linewidth]{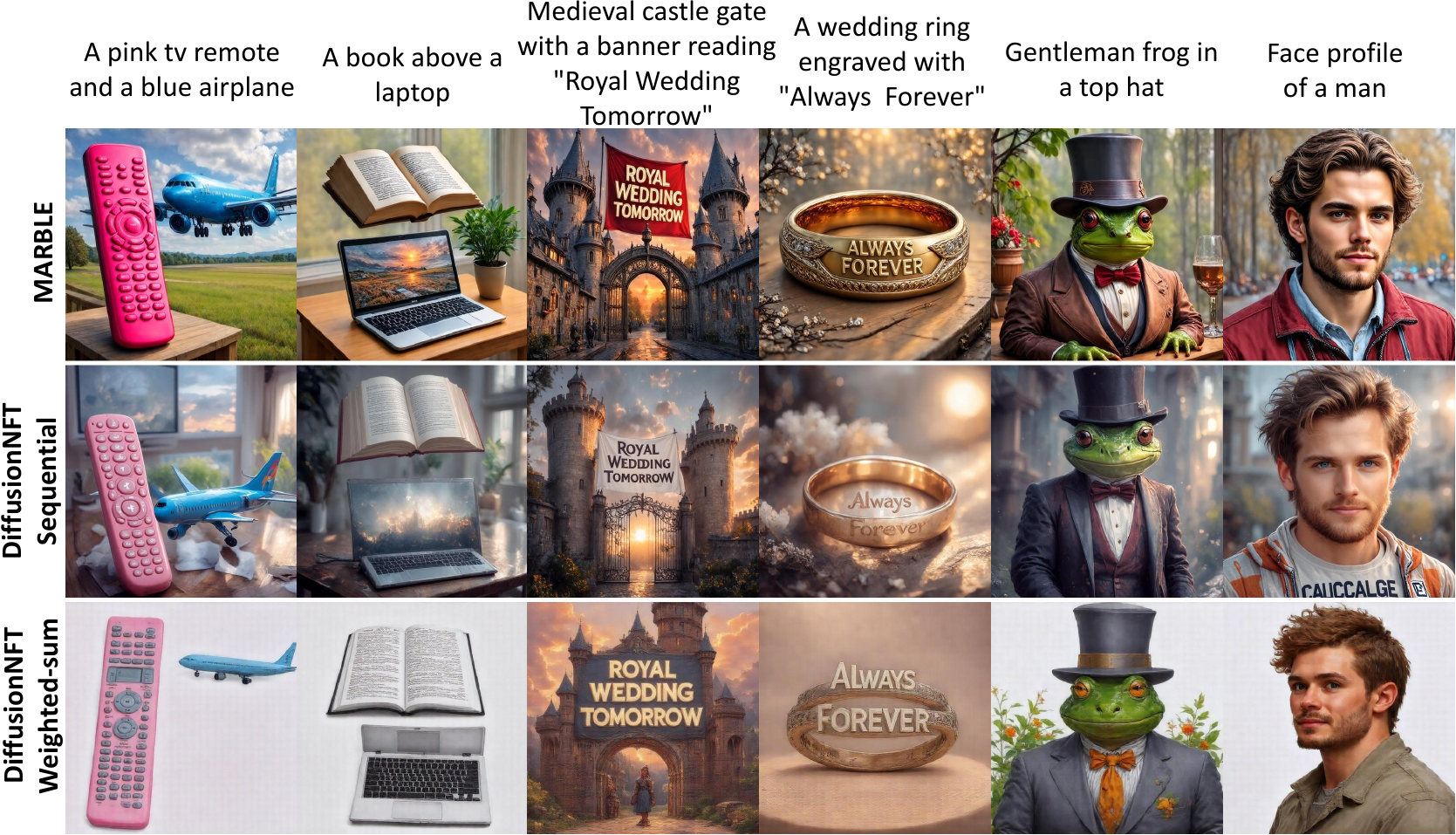}
    \caption{Visualizations of qualitative comparisons between \ours{} and Baselines.}
    \label{fig:main_results}
\end{figure}

Table~\ref{tab:main} reports the main quantitative results. Single-reward FlowGRPO specialists excel on their target objective but transfer poorly to others, requiring a separate model per reward and offering limited cross-objective coverage. In contrast, \ours{} improves all five training rewards within a single model. Qualitative comparisons in Figure~\ref{fig:main_results} further show that \ours{} simultaneously satisfies diverse reward dimensions, while the weighted-sum baseline fails to do so.

Among multi-reward baselines, DiffusionNFT$^\dagger$ (sequential) reaches comparable quality, but at considerable practical cost: it requires a manually scheduled multi-stage curriculum where practitioners must decide the ordering of rewards, the number of steps per stage, and the dataset assigned to each stage, all of which are sensitive to a lot of hyperparameters. More importantly, after introducing a new reward, sequential training \textbf{often needs to revisit previously seen rewards to mitigate forgetting}, so the schedule grows with the number of rewards rather than scaling automatically. The alternative DiffusionNFT$^\ddagger$ (simultaneous) avoids this manual scheduling by directly scalarizing all rewards into a single objective, but as a result performs substantially worse on the specialist objectives. \ours{} matches or exceeds both baselines from a single joint training run, with no per-stage hyperparameters and no explicit replay schedule.

DiffusionNFT$^\dagger$ does score moderately higher on PickScore and slightly higher on CLIPScore, but \ours{} matches or surpasses it on every other reward and ranks first on the four held-out quality metrics (HPSv2.1, Aesthetic, ImageReward, UniReward). To summarize across all eight metrics, we report a Composite score in the last column of Table~\ref{tab:main}, computed as the mean of column-wise z-scores so each metric contributes equally regardless of scale. \ours{} attains the highest Composite, showing that the small concession on PickScore/CLIPScore is more than offset by gains elsewhere. Appendix~\ref{app:user_study} further demonstrates that \ours{}'s results are preferred.

\begin{table}[t]
\vspace{-1mm}
\caption{Training efficiency comparison on 8$\times$H200. We report relative training speed and GPU memory, both normalized by the weighted-sum baseline.}
\vspace{-1mm}
\label{tab:cost}
\centering
\begin{tabularx}{\textwidth}{
    >{\raggedright\arraybackslash}p{0.56\textwidth}
    >{\centering\arraybackslash}p{0.20\textwidth}
    >{\centering\arraybackslash}p{0.20\textwidth}
}
\toprule
Method & Relative speed & GPU memory \\
\midrule
Weighted Sum ($K{=}5$, DiffusionNFT Baseline) & $1.00\times$ & $59$G ($1.00\times$) \\
\ours{} w/ amortization ($K{=}5$, $N{=}10$) & $0.97\times$ & $67$G ($1.14\times$) \\
\ours{} w/o amortization ($K{=}5$) & $0.56\times$ & $67$G ($1.14\times$) \\
\bottomrule
\end{tabularx}
\vspace{-4mm}
\end{table}

\paragraph{Training efficiency.}
We further show the training efficiency comparison in Table~\ref{tab:cost}. All results are measured on 8$\times$H200 and normalized by the weighted-sum baseline. Full per-reward harmonization introduces noticeable overhead, reducing the relative training speed to $0.56\times$ due to the need for multiple reward-specific backward passes. In contrast, the amortized variant substantially reduces this overhead and achieves $0.97\times$ relative speed, which is close to the weighted-sum baseline. Both \ours{} variants require only a modest increase in GPU memory, from $59$G to $67$G per GPU, corresponding to $1.14\times$ relative memory. These results show that amortization makes gradient-space reward balancing practical at nearly the same training speed as scalarized multi-reward training.

\subsection{Ablation Studies and Analysis}
\label{sec:ablation}

\begin{table*}[t]
    \centering
    \renewcommand{\arraystretch}{1.1}
    \vspace{-2mm}
    \caption{\textbf{Ablation study.} Each row removes or replaces one component of \ours{}. All variants use the same 5-reward setup and training budget.}
    \resizebox{\linewidth}{!}{
        \begin{tabular}{lcccccccc}
            \toprule
            \multirow{2}{*}{\textbf{Variant}} 
            & \multicolumn{2}{c}{\textbf{Rule-Based}} 
            & \multicolumn{6}{c}{\textbf{Model-Based}} \\
            \cmidrule(lr){2-3} \cmidrule(lr){4-9}
            & \textbf{GenEval} 
            & \textbf{OCR} 
            & \textbf{PickScore} 
            & \textbf{CLIPScore} 
            & \textbf{HPSv2.1} 
            & \textbf{Aesthetic} 
            & \textbf{ImgRwd} 
            & \textbf{UniRwd} \\
            \midrule
            \ours{} (full, $\rho = 0.7$ ) 
            & 0.93 & 0.96 & 22.62 & 0.283 & 0.355 & 6.59 & 1.52 & 3.45 \\
            \midrule

            % \multicolumn{9}{l}{\textit{Main design ablations}} \\
            w/o gradient normalization 
            & \multicolumn{8}{c}{\textbf{FAIL}} \\
            Fixed $\alpha=0.2$
            & 0.86 & 0.89 & 22.64 & 0.272 & 0.346 & 6.55 & 1.45 & 3.42 \\
            Solve $\alpha$ every step 
            & 0.92 & 0.92 & 21.32 & 0.267 & 0.301 & 5.89 & 1.17 & 3.04 \\
            \bottomrule
        \end{tabular}
    }
    \vspace{-3mm}
    \label{tab:ablation}
\end{table*}

Table~\ref{tab:ablation} ablates the main design choices that determine the update direction in \ours{}: replacing adaptive coefficients with fixed uniform coefficients ($\alpha_k=0.2$), removing gradient normalization before solving $\alpha$, and solving $\alpha$ at every step instead of using the amortized update. Due to space constraints, we report the headline ablations in the main text and defer the supporting analyses to Appendix~\ref{app:additional_ablations}, including training dynamics and coefficient adaptation (Appendix~\ref{app:training_curves}), amortization-interval sensitivity (Appendix~\ref{app:amortization_interval}), EMA-decay sensitivity (Appendix~\ref{app:ema_decay}), and alternative heuristic balancing strategies (Appendix~\ref{app:alternative_strategies}).

\paragraph{Gradient normalization before solving $\alpha$.}
The harmonization coefficients $\alpha$ are computed from the per-reward gradients by solving the QP in Equation~\ref{eq:mgda}. Without gradient normalization, this optimization becomes highly sensitive to the raw magnitudes of different reward gradients. The resulting coefficients tend to be dominated by scale differences rather than the directional relationships among rewards. In practice, we observe that this often produces degenerate or numerically unstable coefficients, leading to failed optimization.

\paragraph{Equal $\alpha$ weighting.}
We examine a simple variant that uses fixed uniform coefficients, i.e., $\alpha_k=0.2$ for all five rewards. This setting leads to imbalanced convergence across different rewards. We observe that general-purpose rewards related to overall visual aesthetics improve relatively quickly, whereas more challenging specialist objectives, such as object attributes and spatial relations, remain under-optimized. In contrast, \ours{} dynamically adjusts the coefficients during training, allocating more optimization emphasis to tasks that are currently harder to improve. This adaptive allocation enables more balanced convergence across both general and specialist rewards.
We further find that the best final performance is obtained by using dynamic coefficients during most of training, followed by a short uniform-coefficient stage near the end. Intuitively, the dynamic stage helps the model allocate capacity to difficult reward dimensions, while the final uniform stage encourages all rewards to be jointly consolidated under an equal weighting. This strategy achieves the strongest overall balance.

\paragraph{Coefficient amortization.}
We also evaluate a variant that solves for $\alpha$ at every training step without amortization. Although this provides a fresh estimate of the gradient geometry at each iteration, it substantially increases training cost due to repeated per-reward backward passes. Moreover, the coefficients estimated from a single mini-batch can fluctuate considerably across iterations, injecting high-frequency variation into the update direction and negatively affecting training stability.

\section{Conclusion}
\label{sec:conclusion}

We propose \oursbf{}, the first multi-reward balancing method for diffusion model RL fine-tuning. \ours{} preserves reward-specific supervision through per-reward advantage decomposition and gradient harmonization, avoiding the specialist-sample dilution that limits weighted-sum aggregation, and its amortized formulation keeps training cost close to the baseline. One limitation of our current study is that we validate \ours{} primarily on image generation. Extending the framework to video diffusion and generative world models remains an important direction, as these settings involve richer and more heterogeneous quality dimensions, such as temporal consistency, motion realism, and physical plausibility, making reward balancing even more critical. Another promising direction is scaling \ours{} to larger reward sets, where both optimization and efficiency become tighter challenges. We believe that \ours{} provides an important step toward scalable multi-objective alignment for future generative models.
\clearpage

\clearpage
{\small
\bibliographystyle{plain}
\bibliography{main}
}

\clearpage

\appendix
\renewcommand\thesection{\Alph{section}}
\renewcommand\thefigure{S\arabic{figure}}
\renewcommand\thetable{S\arabic{table}}
\renewcommand\theequation{S\arabic{equation}}
\renewcommand\theHfigure{S\arabic{figure}}
\renewcommand\theHtable{S\arabic{table}}
\renewcommand\theHequation{S\arabic{equation}}
\setcounter{figure}{0}
\setcounter{table}{0}
\setcounter{equation}{0}

\section{Appendix Overview}
\label{app:overview}

This appendix provides supporting material for \ours{}, including qualitative examples, paper-level takeaways, extended ablations, implementation details, and future directions. The contents are organized as follows:
\begin{itemize}
    \item \textbf{Appendix~\ref{app:qualitative}: Additional Qualitative Comparisons.} 
    We provide additional qualitative examples illustrating how a single \ours{} model handles text rendering, attribute and spatial understanding, and counting while maintaining coherent visual quality.

    \item \textbf{Appendix~\ref{app:additional_ablations}: Additional Ablations and Analyses.} 
    We collect the main takeaways and extended empirical analyses behind \ours{}:
    \begin{itemize}
        \item \textbf{Appendix~\ref{app:key_takeaways}: Key Insights and Takeaways.} 
        A paper-level summary of the main MARBLE insights, including why scalar reward aggregation fails, how to interpret the learned coefficients, and which practical defaults are important when using \ours{}.

        \item \textbf{Appendix~\ref{app:harmony_diagnostics}: Update-Direction Harmony Diagnostics.}
        We provide the per-batch harmony visualization and aggregate statistics comparing weighted-sum and harmonized update directions.

        \item \textbf{Appendix~\ref{app:training_curves}: Training Dynamics and Coefficient Adaptation.} 
        We provide training curves, coefficient trajectories, and a closer analysis of how the learned coefficients relate to optimization difficulty.

        \item \textbf{Appendix~\ref{app:amortization_interval}: Amortization Interval.} 
        We analyze different coefficient refresh intervals and explain why $N{=}10$ is used as the main setting.

        \item \textbf{Appendix~\ref{app:ema_decay}: EMA Decay for Coefficient Smoothing.}
        We discuss how the EMA decay controls the stability-adaptivity trade-off in coefficient smoothing.

        \item \textbf{Appendix~\ref{app:alternative_strategies}: Alternative Heuristic Strategies.} 
        We compare \ours{} with heuristic reward-balancing strategies, including fixed uniform coefficients, reward grouping, and specialist reward up-weighting.

        \item \textbf{Appendix~\ref{app:user_study}: Human Preference Evaluation and Metric Correlations.}
We provide a human preference study and metric-correlation analysis to examine the benefit of improving multiple reward dimensions simultaneously. We show the importance of broad reward coverage: improving across multiple dimensions leads to more generally preferred outputs than optimizing a single metric in isolation.
    \end{itemize}

    \item \textbf{Appendix~\ref{app:implementation}: Additional Implementation Details.} 
    We describe implementation details for reproducing \ours{} in distributed training, focusing on how to extract, synchronize, and harmonize per-reward gradients under DDP.

    \item \textbf{Appendix~\ref{app:future_work}: Future Work.} 
    We discuss future directions, including scaling \ours{} to larger reward sets and extending reward-balanced optimization to video generation and generative world models.
\end{itemize}

\section{Additional Qualitative Comparisons}
\label{app:qualitative}

\begin{figure}[ht]
    \centering
    \includegraphics[width=\textwidth]{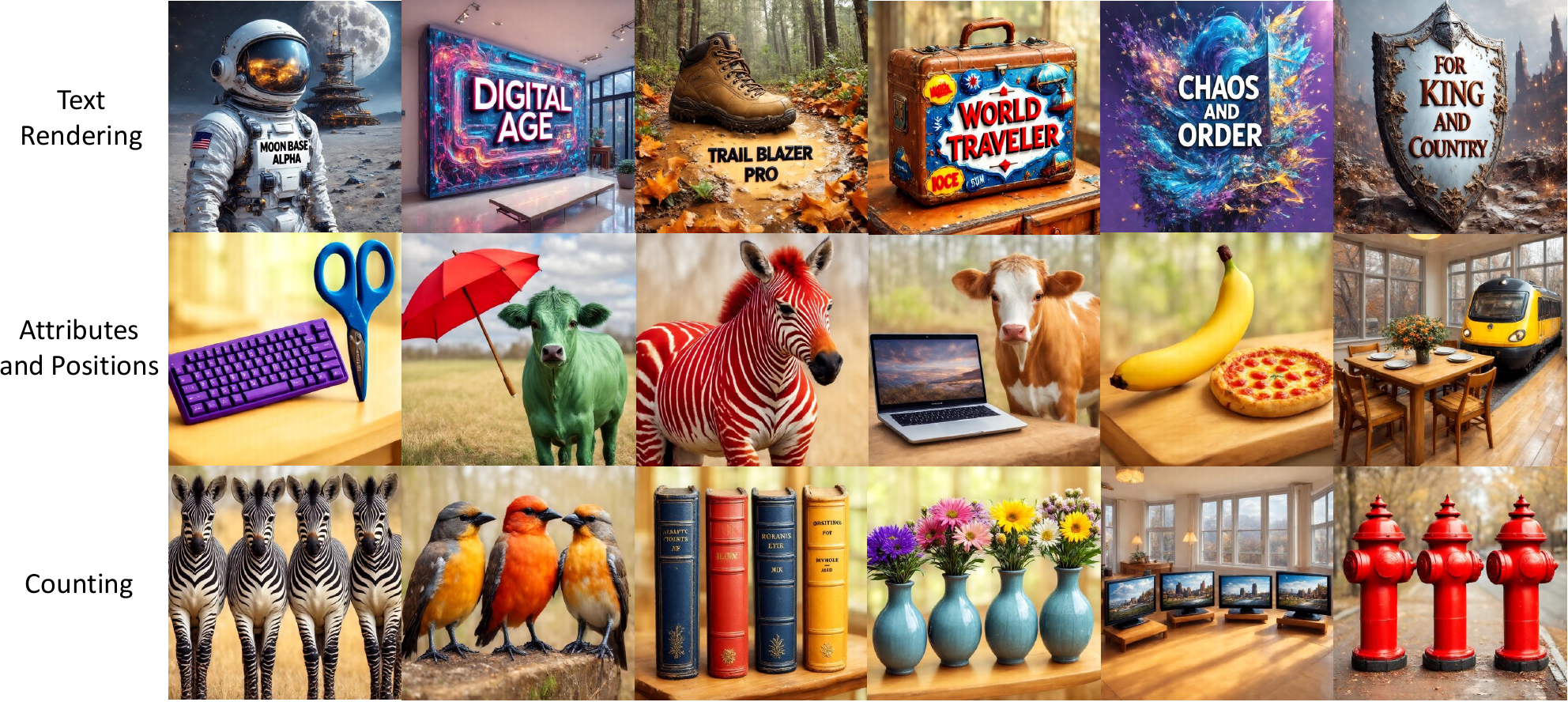}
    \caption{
    Additional qualitative results of \ours{}.
    Our one single model demonstrates simultaneous improvements in text rendering, attribute and position understanding, and counting.
    \ours{} generates legible text, preserves fine-grained attribute-object bindings and spatial layouts, and follows counting constraints while maintaining coherent visual quality.
    }
    \label{fig:additional_qualitative}
\end{figure}

We provide additional qualitative results in Figure~\ref{fig:additional_qualitative} and additional comparisons in Figure~\ref{fig:supp1},~\ref{fig:supp2},~\ref{fig:supp3} and~\ref{fig:supp4} to further illustrate the effectiveness of \ours{}. The visualizations cover three representative specialist capabilities: text rendering, attribute and spatial composition, and counting. As shown in Figure~\ref{fig:additional_qualitative}, \ours{} produces legible and semantically consistent text across diverse contexts, preserves fine-grained attribute-object bindings and spatial layouts, and follows counting constraints while maintaining coherent visual quality.

The comparisons in Figure~\ref{fig:supp1},~\ref{fig:supp2},~\ref{fig:supp3} and~\ref{fig:supp4} further highlight the limitations of existing baselines. The weighted-sum baseline often fails to improve all aspects simultaneously: although it can sometimes satisfy specific requirements such as object counts or attributes, its overall visual quality is visibly degraded. Compared with DiffusionNFT, \ours{} generates sharper images with fewer blur and distortion artifacts. Moreover, DiffusionNFT relies on a heavily tuned training schedule to obtain a competitive unified model, whereas \ours{} achieves balanced improvements through gradient-space reward harmonization. Overall, \ours{} better balances different reward dimensions, producing visually sharper images while more reliably satisfying text rendering, attribute binding, spatial layout, and counting requirements. These qualitative results are consistent with the quantitative improvements and demonstrate the strengths of \ours.

\section{Additional Ablations and Analyses}
\label{app:additional_ablations}

\subsection{Key Insights and Takeaways}
\label{app:key_takeaways}
This section summarizes the main takeaways of \ours{} as a whole: what problem it addresses, why its gradient-space design is effective, and how to use it in practice.
\begin{itemize}
    \item \textbf{The central problem is scalar reward aggregation, not merely a poor choice of scalar weights.} Table~\ref{tab:main} shows that direct simultaneous optimization with a weighted-sum reward underperforms on several reward dimensions. The gradient-alignment analysis in Figure~\ref{fig:harmony} provides further evidence that scalar aggregation weakens reward-specific optimization signals. This suggests that multi-reward diffusion RL should preserve reward-specific supervision rather than heuristically collapse all feedback into a single scalar reward.

    \item \textbf{The key contribution of \ours{} is gradient-space balancing with per-reward credit assignment.} By maintaining separate advantages and harmonizing reward-specific gradients, \ours{} outperforms the simultaneous weighted-sum baseline on all five optimized rewards within a single model, while matching or slightly surpassing the sequentially trained baseline that requires extensive manual schedule tuning (Table~\ref{tab:main}). The ablations in Table~\ref{tab:ablation} further show that both gradient normalization and adaptive harmonization are important: removing normalization leads to optimization failure, while fixed uniform coefficients, i.e., $\alpha_k=0.2$, weaken the balance between general image-quality rewards and harder specialist rewards.

    \item \textbf{$\alpha$ partially balances optimization across tasks with different difficulty levels.} 
    As shown in Figure~\ref{fig:alpha}, the smoothed coefficients do not simply track the corresponding raw reward curves. Instead, they appear to reflect, to some extent, the relative optimization difficulty of each reward. Easier image-quality rewards, such as HPSv2, tend to receive coefficients below the uniform baseline of $0.2$, whereas harder specialist rewards, such as GenEval, can receive larger coefficients, around $0.3$ during training. Therefore, a larger coefficient should not be interpreted as indicating that the reward value is higher; rather, it suggests that the current gradient geometry allocates more optimization emphasis to that reward.

    \item \textbf{Amortization and EMA smoothing are practical defaults, not just efficiency tricks.} Full per-step harmonization is conceptually clean but expensive and sensitive to batch-level coefficient fluctuations. The amortized update keeps the training speed close to the weighted-sum baseline (Table~\ref{tab:cost}), while EMA smoothing reduces abrupt changes in $\alpha$ between full harmonization steps and prevents a transient weak batch from suppressing a reward for an entire amortization window. We therefore use $N=10$ and $\rho=0.7$ as the default setting in the main experiments: $N=10$ provides similar performance to $N=5$ but is slightly faster because it refreshes coefficients less often, while avoiding the degradation observed at $N=20$ (Table~\ref{tab:amortization_interval}); $\rho=0.7$ is the default EMA setting reported in Table~\ref{tab:ablation_ema}.

    \item \textbf{Multi-dimensional image-quality improvement matters.}
    Image quality is multi-dimensional, which cannot be fully captured by any single reward model. Different metrics, such as PickScore, CLIPScore, HPSv2, Aesthetic Score, ImageReward, and UniReward, emphasize different aspects of generation quality, including human preference, text-image alignment, aesthetics, and perceptual realism. We observe that improving one metric alone does not necessarily translate to consistent gains on others, and can 
    sometimes lead to weaker overall visual quality. In contrast, \ours{} consistently achieves broader improvements across multiple image-quality metrics, suggesting that multi-reward balancing leads to more general and robust perceptual enhancement. Please see more details in Appendix~\ref{app:user_study}.
\end{itemize}

\subsection{Update-Direction Harmony Diagnostics}
\label{app:harmony_diagnostics}

\begin{figure}[ht]
    \centering
    \includegraphics[width=\textwidth]{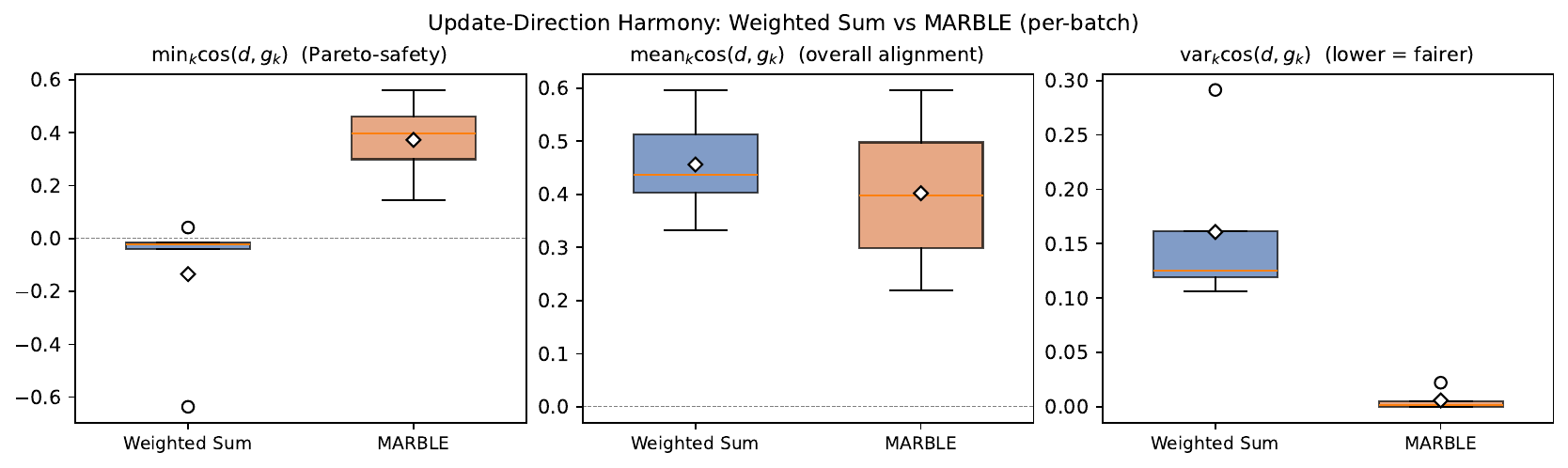}
    \caption{\textbf{Update-direction harmony.}
    Per-batch $\min_k$, $\mathrm{mean}_k$, and $\mathrm{var}_k$ of $\{\cos(d,g_k)\}$ for the weighted-sum direction $d=g_{\mathrm{ws}}=\tfrac{1}{K}\sum_k g_k$ and the harmonized direction $d=g_{\mathrm{marble}}$ (Section~\ref{sec:harmonization}). The corresponding aggregate statistics are reported in Table~\ref{tab:harmony_stats}.}
    \label{fig:harmony}
\end{figure}

\begin{table}[ht]
\caption{\textbf{Update-direction harmony statistics.} Averages over $n=5$ mini-batches. $\uparrow$ means larger is better; $\downarrow$ means smaller is better.}
\label{tab:harmony_stats}
\centering
\small
\setlength{\tabcolsep}{3pt}
\begin{tabularx}{\textwidth}{>{\raggedright\arraybackslash}Xccc}
\toprule
\textbf{Statistic} & \textbf{Weighted sum} & \textbf{\ours{}} & \textbf{$\Delta$} \\
\midrule
Worst-reward alignment $\min_k \cos(d,g_k)$ $\uparrow$ & $-0.1346$ & $+0.3721$ & $+0.5067$ \\
Average alignment $\mathrm{mean}_k \cos(d,g_k)$ $\uparrow$ & $+0.4559$ & $+0.4014$ & $-0.0545$ \\
Alignment imbalance $\mathrm{var}_k \cos(d,g_k)$ $\downarrow$ & $+0.1605$ & $+0.0058$ & $-0.1548$ \\
Conflict rate $P(\min_k \cos(d,g_k)<0)$ $\downarrow$ & $0.800$ & $0.000$ & $-0.800$ \\
\bottomrule
\end{tabularx}
\end{table}

For a mini-batch, let $g_k$ denote the policy gradient induced by reward $R_k$, and let $d$ denote the update direction produced by a multi-reward training rule. If $\cos(d,g_k)<0$ for some reward $k$, then the shared update is anti-aligned with that reward's own gradient on the same batch. Table~\ref{tab:harmony_stats} shows that the weighted-sum direction has a negative worst-reward cosine on average and produces a negative worst-reward alignment in $80\%$ of the measured mini-batches. In contrast, the harmonized direction raises the worst-reward cosine from $-0.1346$ to $+0.3721$ and eliminates negative-minimum batches in this measurement, while keeping the mean cosine similar. Its across-reward variance is also much smaller ($0.0058$ vs. $0.1605$), indicating that the update direction is more evenly aligned with the five reward gradients.

\subsection{Training Dynamics and Coefficient Adaptation}
\label{app:training_curves}
\label{app:alpha_analysis}

\begin{figure}[ht]
    \centering
    \includegraphics[width=\textwidth]{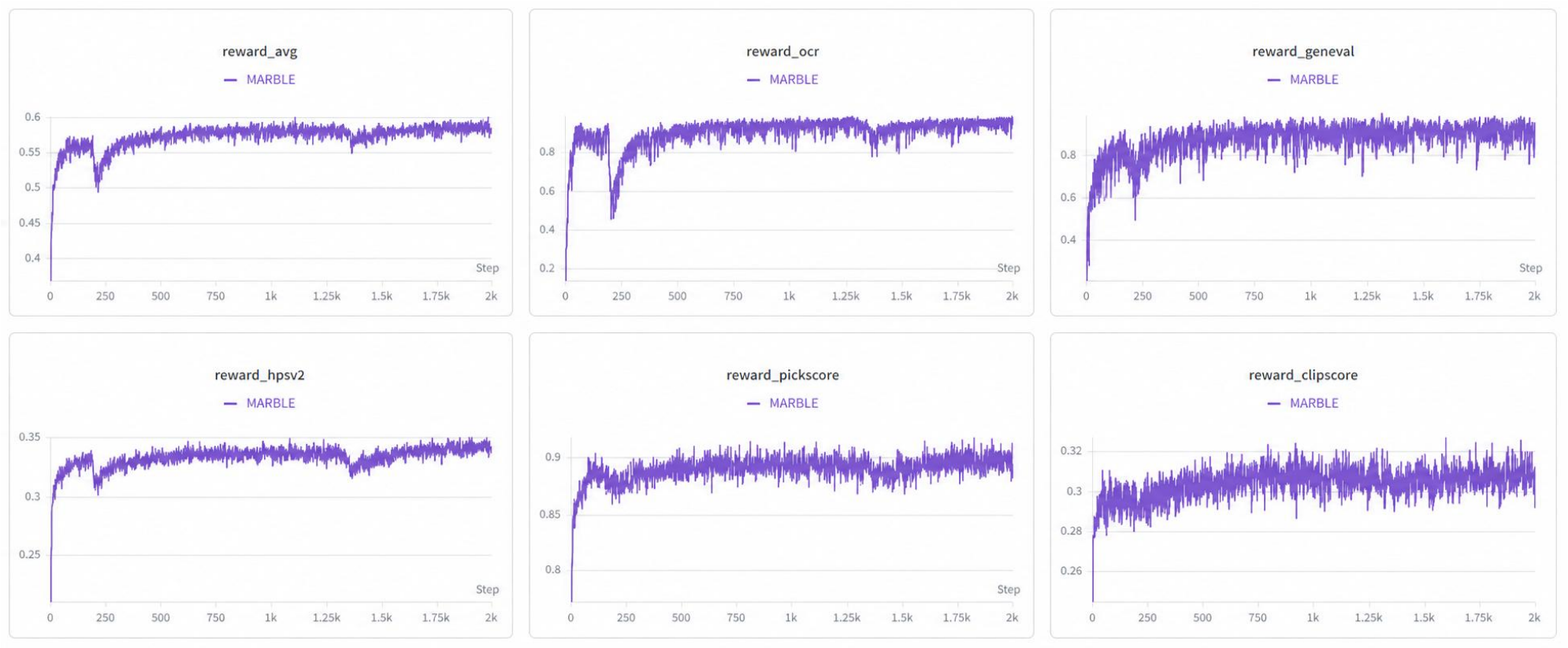}
    \caption{
    Training curves of \ours{} across the five optimized rewards. The curves show that all rewards continue improving during training, including both general image-quality rewards and specialist rewards.
    }
    \label{fig:marble_curves}
\end{figure}

\begin{figure}[ht]
    \centering
    \includegraphics[width=\textwidth]{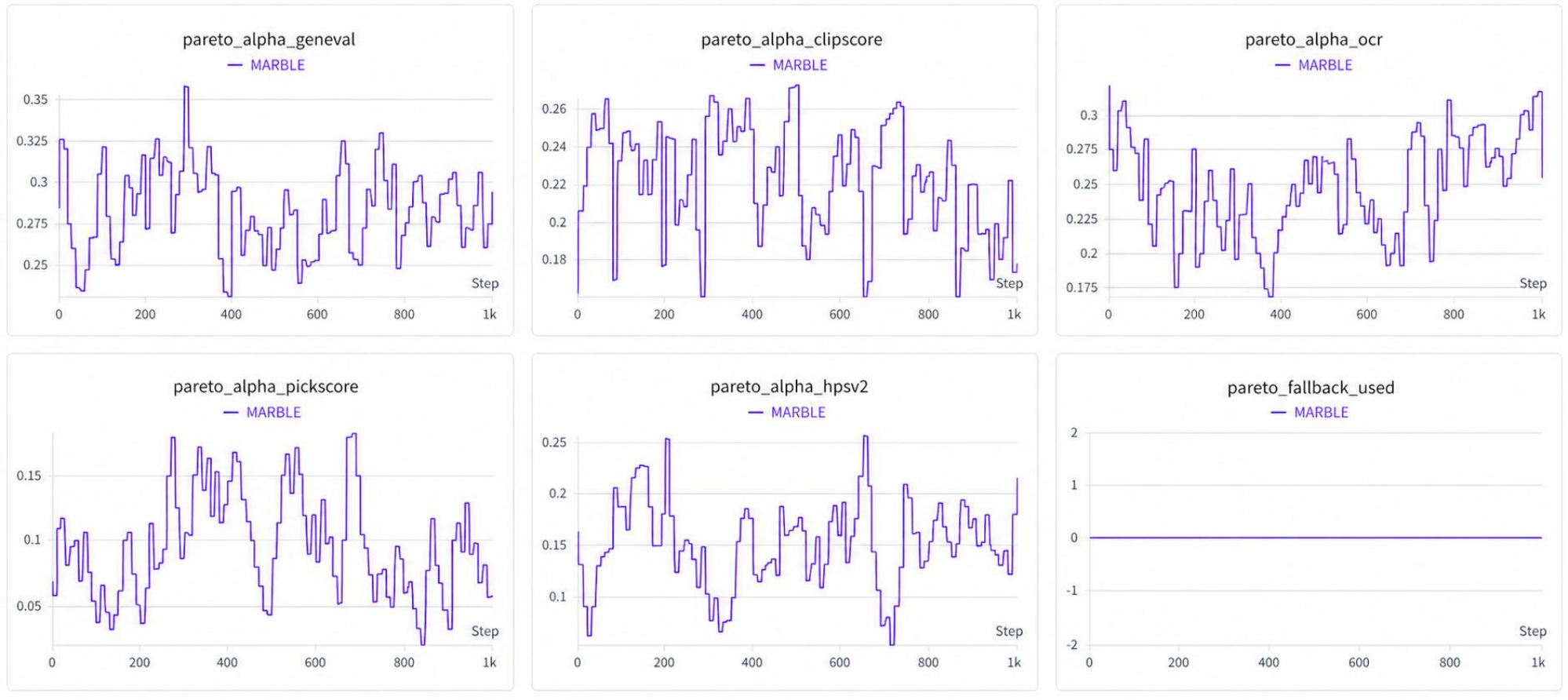}
    \caption{
    Dynamics of the smoothed balancing coefficients $\bar\alpha_{t,k}$ during training. The uniform baseline for five rewards is $0.2$. The learned coefficients do not directly track raw reward values, but show different allocation patterns across rewards with different optimization difficulty. The curve \texttt{pareto\_fallback\_used} counts how often the clamp condition discussed in Section~\ref{sec:amortized} is triggered.
    }
    \label{fig:alpha}
\end{figure}

\begin{figure}[ht]
    \centering
    \includegraphics[width=\textwidth]{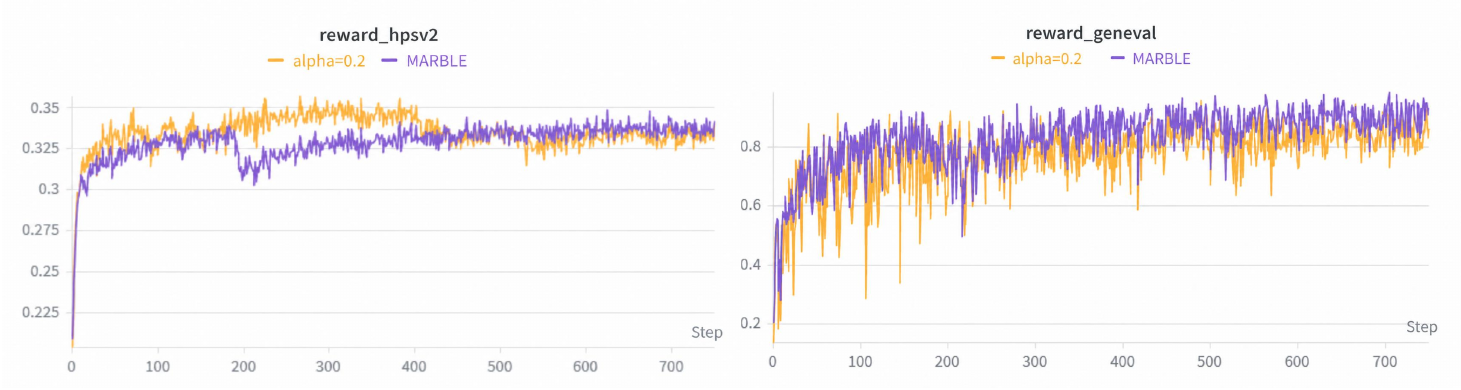}
    \caption{
    Learning-curve comparison between fixed uniform coefficients $\alpha_k=0.2$ and full \ours{}. Uniform coefficients make fast early progress on HPSv2, a broad image-quality reward, but converge more slowly and to a lower final score on GenEval, a harder specialist reward.
    }
    \label{fig:0.2_Marble}
\end{figure}

We complement the Equal-$\alpha$ ablation in Section~\ref{sec:ablation} with a closer view of how the smoothed coefficients $\bar\alpha_{t,k}$ evolve during training.

\paragraph{Uniform $\alpha=0.2$ is competitive on easy rewards but loses on specialists.}
Figure~\ref{fig:0.2_Marble} shows the per-reward learning curves under fixed $\alpha_k=0.2$ and under \ours{}. On HPSv2, an easy image-quality reward, fixed $\alpha_k=0.2$ trains slightly faster than \ours{} in the early iterations, while \ours{} converges to a higher final value. On GenEval, a harder specialist reward, fixed $\alpha_k=0.2$ trains more slowly and ends at a lower score than \ours{}, matching the Equal-$\alpha$ row of Table~\ref{tab:ablation}.

\paragraph{$\bar\alpha_{t,k}$ is related to optimization difficulty.}
As shown in Figure~\ref{fig:marble_curves} and~\ref{fig:alpha}, we do not observe a direct relationship between $\bar\alpha_{t,k}$ and the corresponding raw reward value. Instead, $\bar\alpha_{t,k}$ appears to be related, to some extent, to how hard each reward is to optimize (Figure~\ref{fig:alpha}). On HPSv2, which the base model already largely satisfies, $\bar\alpha_{t,k}$ stays below the uniform $0.2$ baseline for most of training. On GenEval, which demands precise compositional correctness, $\bar\alpha_{t,k}$ often rises to around $0.3$. Figure~\ref{fig:alpha} also reports \texttt{pareto\_fallback\_used}, the number of times the clamp condition in Section~\ref{sec:amortized} is triggered; this value stays at zero throughout training, indicating that the clamp is not activated in the plotted run. Together, these observations suggest that the learned coefficients can shift optimization emphasis away from easier rewards and toward harder specialist rewards, helping the five rewards progress more evenly during training (Figure~\ref{fig:marble_curves}).

\subsection{Amortization Interval}
\label{app:amortization_interval}

\begin{table*}[ht]
    \centering
    \renewcommand{\arraystretch}{1.1}
    \vspace{-2mm}
    \caption{\textbf{Sensitivity to the amortization interval.} We compare different values of $N$, which controls how often the full harmonization procedure refreshes $\alpha^*$. Smaller intervals refresh the balancing coefficients more frequently, while larger intervals reuse older coefficients for more update steps.}
    \label{tab:amortization_interval}
    \resizebox{\linewidth}{!}{
        \begin{tabular}{lcccccccc}
            \toprule
            \multirow{2}{*}{\textbf{Interval $N$}} 
            & \multicolumn{2}{c}{\textbf{Rule-Based}} 
            & \multicolumn{6}{c}{\textbf{Model-Based}} \\
            \cmidrule(lr){2-3} \cmidrule(lr){4-9}
            & \textbf{GenEval} 
            & \textbf{OCR} 
            & \textbf{PickScore} 
            & \textbf{CLIPScore} 
            & \textbf{HPSv2.1} 
            & \textbf{Aesthetic} 
            & \textbf{ImgRwd} 
            & \textbf{UniRwd} \\
            \midrule
            $N=5$  & 0.92 & \textbf{0.96} & 22.51 & 0.280 & 0.350 & 6.56 & 1.48 & 3.43 \\
            $N=10$ & \textbf{0.94} & \textbf{0.96} & \textbf{22.83} & \textbf{0.286} & \textbf{0.355} & \textbf{6.59} & \textbf{1.53} & \textbf{3.52} \\
            $N=20$ & 0.92 & 0.93 & 21.97 & 0.277 & 0.332 & 6.31 & 1.37 & 3.28 \\
            \bottomrule
        \end{tabular}
    }
    \vspace{-1mm}
\end{table*}

The amortization interval $N$ controls how often the full harmonization procedure recomputes $\alpha^*$ before the cached coefficients are reused for single-backward updates. A smaller $N$ refreshes the balancing coefficients more frequently, making the update direction more responsive to the current gradient geometry but increasing the average training overhead. A larger $N$ lowers this overhead, but it also reuses potentially stale coefficients for more steps. In our sensitivity runs, $N{=}5$ and $N{=}10$ give similar performance where $N{=}10$ performs only slightly better, suggesting that the EMA-smoothed coefficients are stable enough to be reused over a moderate window. Between these two settings, we choose $N{=}10$ because it refreshes the coefficients less frequently and is therefore slightly faster than $N{=}5$ while maintaining comparable performance. However, increasing the interval to $N{=}20$ leads to a performance drop, indicating that overly infrequent refreshes can make the cached coefficients lag behind the changing reward-gradient geometry. We therefore use $N{=}10$ in the main experiments as a practical middle ground between frequent coefficient refresh and lower computational overhead.

\subsection{EMA Decay for Coefficient Smoothing}
\label{app:ema_decay}

\begin{table*}[t]
    \centering
    \renewcommand{\arraystretch}{1.1}
    \vspace{-2mm}
    \caption{\textbf{EMA decay for coefficient smoothing.} The default setting $\rho=0.7$ is used in the main experiments; the remaining rows list decay values considered around this default.}
    \resizebox{\linewidth}{!}{
        \begin{tabular}{lcccccccc}
            \toprule
            \multirow{2}{*}{\textbf{EMA decay}} 
            & \multicolumn{2}{c}{\textbf{Rule-Based}} 
            & \multicolumn{6}{c}{\textbf{Model-Based}} \\
            \cmidrule(lr){2-3} \cmidrule(lr){4-9}
            & \textbf{GenEval} 
            & \textbf{OCR} 
            & \textbf{PickScore} 
            & \textbf{CLIPScore} 
            & \textbf{HPSv2.1} 
            & \textbf{Aesthetic} 
            & \textbf{ImgRwd} 
            & \textbf{UniRwd} \\
            \midrule
            $\rho = 0.1$ 
            & 0.86 & 0.80 & 21.52 & 0.261 & 0.292 & 5.84 & 1.22 & 2.98 \\
            $\rho = 0.3$ 
            & 0.88 & 0.84 & 21.76 & 0.266 & 0.312 & 6.03 & 1.27 & 3.04 \\
            $\rho = 0.5$ 
            & 0.93 & 0.95 & 22.02 & 0.276 & 0.340 & 6.14 & 1.48 & 3.43 \\
            $\rho = 0.7$
            & \textbf{0.94} & \textbf{0.96} & \textbf{22.83} & \textbf{0.286} & \textbf{0.355} & \textbf{6.59} & \textbf{1.53} & \textbf{3.52} \\
            $\rho = 0.9$ 
            & 0.90 & 0.89 & 22.14 & 0.272 & 0.342 & 6.26 & 1.47 & 3.40 \\
            \bottomrule
        \end{tabular}
    }
    \vspace{-1mm}
    \label{tab:ablation_ema}
\end{table*}

The EMA decay $\rho$ controls the adaptivity and stability in coefficient smoothing. Smaller values make the coefficients more responsive to the current gradient geometry, but also more sensitive to mini-batch noise, leading to larger fluctuations in the reward curves. Larger values produce smoother coefficient trajectories, but may become overly inertial and adapt slowly when the relative difficulty of rewards changes during training. We evaluate $\rho \in \{0.1,0.3,0.5,0.7,0.9\}$, with the quantitative results reported in Table~\ref{tab:ablation_ema}, the training curves shown in Fig.~\ref{fig:EMA}, and qualitative visualizations provided in Fig.~\ref{fig:EMA_viz}. Across these evaluations, $\rho=0.7$ provides the best overall performance: it maintains stable optimization, achieves strong performance, and produces visually more faithful and coherent samples. We therefore use $\rho=0.7$ as the default setting in all main experiments.

\begin{figure}[ht]
    \centering
    \includegraphics[width=\textwidth]{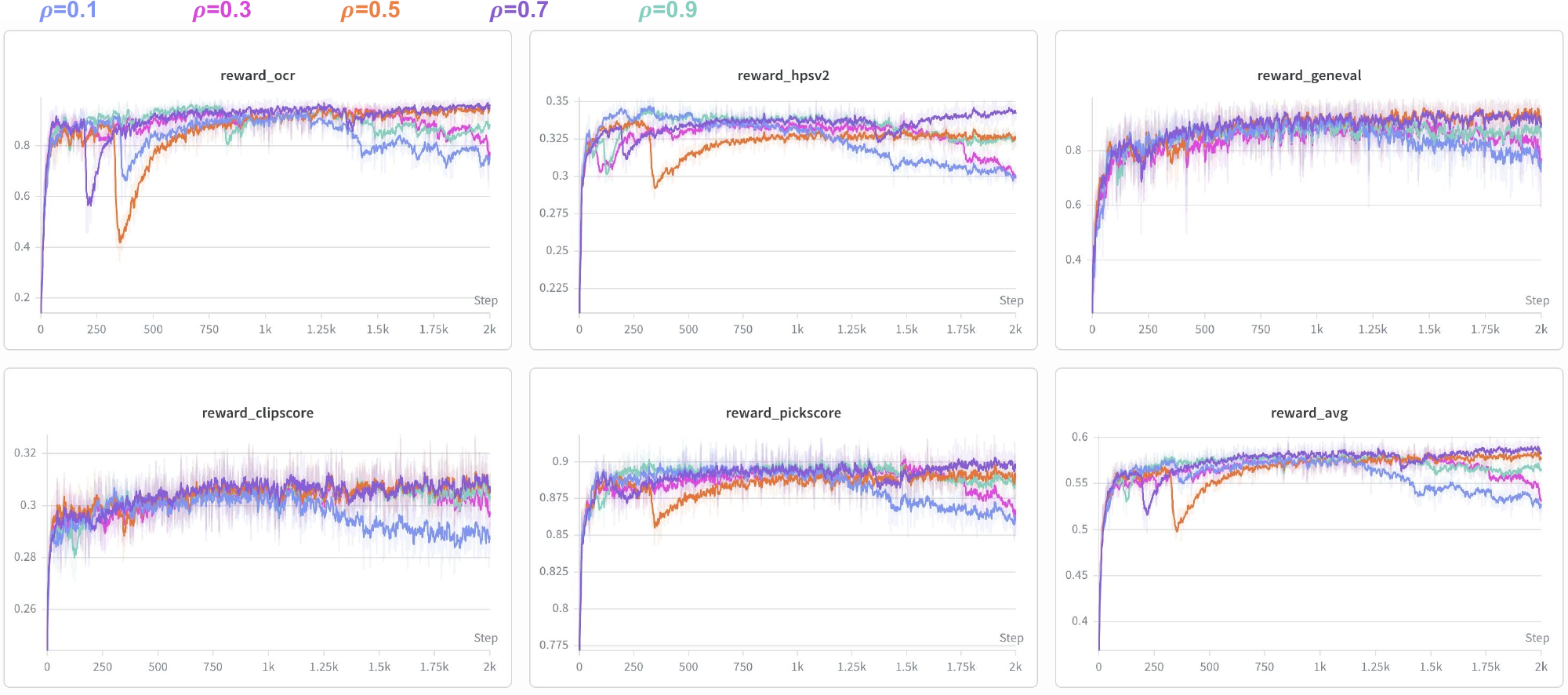}
    \caption{Sensitivity to EMA decay $\rho$. $\rho=0.7$ achieves the best overall performance.
    }
    \label{fig:EMA}
\end{figure}

\begin{figure}[ht]
    \centering
    \includegraphics[width=\textwidth]{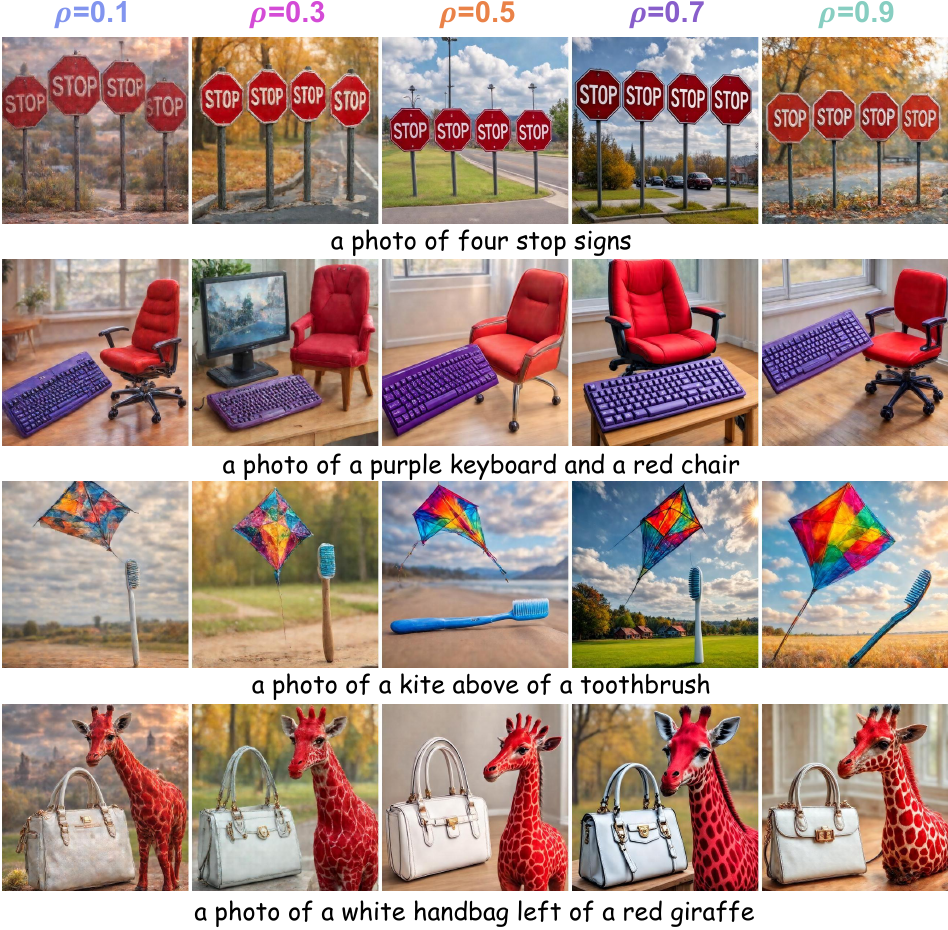}    \caption{Qualitative comparison of different EMA decay values $\rho$.}
    \label{fig:EMA_viz}
\end{figure}

\subsection{Alternative Heuristic Strategies}
\label{app:alternative_strategies}
\begin{table*}[t]
    \centering
    \renewcommand{\arraystretch}{1.1}
    \vspace{-2mm}
    \caption{\textbf{Alternative reward-balancing strategies.} We compare \ours{} with several heuristic strategies for multi-reward diffusion RL. All variants use the same 5-reward setup and training budget.}
    \resizebox{\linewidth}{!}{
        \begin{tabular}{lcccccccc}
            \toprule
            \multirow{2}{*}{\textbf{Method}} 
            & \multicolumn{2}{c}{\textbf{Rule-Based}} 
            & \multicolumn{6}{c}{\textbf{Model-Based}} \\
            \cmidrule(lr){2-3} \cmidrule(lr){4-9}
            & \textbf{GenEval} 
            & \textbf{OCR} 
            & \textbf{PickScore} 
            & \textbf{CLIPScore} 
            & \textbf{HPSv2.1} 
            & \textbf{Aesthetic} 
            & \textbf{ImgRwd} 
            & \textbf{UniRwd} \\
            \midrule
            \ours{} & \textbf{0.94} & \textbf{0.96} & \textbf{22.83} & \textbf{0.286} & \textbf{0.355} & \textbf{6.59} & \textbf{1.53} & \textbf{3.52} \\
            \midrule
            Reward Dropout 
            & 0.80 & 0.61 & 21.35 & 0.267 & 0.270 & 5.89 & 0.73 & 2.88 \\
            Reward Weighting $(1{:}1{:}1{:}2{:}2)$
            & 0.90 & 0.92 & 21.06 & 0.271 & 0.285 & 5.91 & 1.19 & 3.13 \\
            \bottomrule
        \end{tabular}
    }
    \vspace{-1mm}
    \label{tab:alternative_strategies}
\end{table*}

We also examine several simple heuristic strategies for balancing multiple rewards in diffusion RL, as shown in Table~\ref{tab:alternative_strategies}. These include using fixed uniform coefficients and increasing the scalar weights of more challenging specialist rewards, such as OCR and GenEval, in the weighted-sum objective. However, our experiments show that these heuristic strategies fail to achieve simultaneous improvements across all reward dimensions and can lead to performance degradation on several metrics. This suggests that reward balancing in multi-reward diffusion RL cannot be reliably addressed by manual reward reweighting or coarse reward-level heuristics, further demonstrating the effectiveness of \ours{}.

\subsection{Human Preference Evaluation and Metric Correlations}
\label{app:user_study}

Evaluating image generation quality requires considering multiple complementary criteria rather than relying on a single automatic proxy. As shown in Table~\ref{tab:main}, different metrics do not always rank methods consistently: DiffusionNFT$^\dagger$ obtains higher PickScore and CLIPScore, whereas \ours{} achieves the best Composite score and performs better on several broader quality- and preference-oriented metrics. This discrepancy reflects the fact that automatic metrics capture different aspects of generation quality. 

To obtain a more comprehensive assessment of human preference, we conduct a blind rating-based user study. For each method, we randomly sample $30$ generated images. All images are anonymized, randomly shuffled, and shown with their corresponding prompts. A total of $20$ anonymous participants who are unrelated to the project independently score each image on a $1$--$5$ scale along two axes: text-image alignment and image quality. Higher scores indicate better perceived performance. We report the average score over all ratings in Table~\ref{tab:user_study}.

\begin{table*}[ht]
\centering
\caption{User study results. Participants score anonymized and randomly shuffled images on a $1$--$5$ scale for text-image alignment and image quality. Higher scores are better.}
\label{tab:user_study}
\begin{tabularx}{\textwidth}{
    >{\raggedright\arraybackslash}X
    >{\centering\arraybackslash}X
    >{\centering\arraybackslash}X
}
\toprule
Method & Text-image alignment $\uparrow$ & Image quality $\uparrow$ \\
\midrule
DiffusionNFT$^\ddagger$ & 3.60 & 2.79 \\
DiffusionNFT$^\dagger$ & 4.26 & 3.58 \\
\ours{} & \textbf{4.63} & \textbf{4.41} \\
\bottomrule
\end{tabularx}
\end{table*}

\ours{} receives the highest average score on both text-image alignment and image quality. We do not claim statistical significance from this study; rather, we use it as a complementary human-centered evaluation to automatic metrics. The results suggest that the lower PickScore and CLIPScore of \ours{} do not correspond to lower human-rated quality in this setting. Instead, the user study is more consistent with the broader set of automatic metrics, including the Composite score, supporting the need to evaluate image generation quality with multiple complementary criteria.

\paragraph{Automatic metrics and human judgment.}
We further analyze how different automatic metrics align with human preference. Specifically, we compute the Pearson correlation between each image-quality-related metric and the two human-rated axes, namely image quality and text-image alignment, across all images in the user study. Table~\ref{tab:user_study_corr} reports the results.

The correlation analysis shows that holistic preference metrics, including HPSv2.1, Aesthetic Score, UniReward, and ImageReward, are positively correlated with both human-rated axes, with HPSv2.1 showing the strongest agreement. In contrast, PickScore and CLIPScore are weaker predictors of human ratings. This result further indicates that no single automatic metric is sufficient to characterize human-perceived generation quality. Therefore, the PickScore/CLIPScore advantage of DiffusionNFT$^\dagger$ should be interpreted as a metric-specific difference rather than a definitive indication of superior perceptual quality.

The qualitative visualizations in Figure~\ref{fig:supp1}--\ref{fig:supp4} provide further evidence for this conclusion. Across diverse prompts, \ours{} better preserves fine-grained requirements such as text rendering, attribute binding, spatial layout, and counting, while maintaining sharper and more coherent visual quality. In contrast, the weighted-sum baseline often fails to improve all aspects simultaneously, and DiffusionNFT$^\dagger$ sometimes produces less sharp or less detailed images despite its strong proxy scores. Taken together, the quantitative results in Table~\ref{tab:main}, the qualitative comparisons in Figure~\ref{fig:supp1}--\ref{fig:supp4}, and the user study consistently show that \ours{} achieves the best overall balance between automatic metrics, visual quality, and human preference.

\begin{table}[ht]
\centering
\caption{\textbf{Pearson correlation between automatic metrics and human ratings}, computed on all images of the user study (sorted by correlation with image quality). Higher (more positive) values indicate stronger agreement with the human axis.}
\label{tab:user_study_corr}
\small
\setlength{\tabcolsep}{6pt}
\renewcommand{\arraystretch}{1.1}
\begin{tabularx}{\textwidth}{l >{\centering\arraybackslash}X >{\centering\arraybackslash}X}
\toprule
\textbf{Metric} & \textbf{r with image quality} & \textbf{r with text-image alignment} \\
\midrule
HPSv2.1     & +0.66 & +0.56 \\
Aesthetic   & +0.36 & +0.33 \\
UniReward   & +0.25 & +0.39 \\
ImageReward & +0.19 & +0.35 \\
PickScore   & +0.17 & +0.26 \\
CLIPScore   & +0.00 & +0.09 \\
\bottomrule
\end{tabularx}
\end{table}

\section{Additional Implementation Details}
\label{app:implementation}

We provide full implementation details for reproducing \ours{} on the DiffusionNFT codebase with distributed training.

In distributed data-parallel (DDP) training, each GPU processes a different data shard and gradients are averaged across ranks via \texttt{AllReduce} before the optimizer step. With \ours{}, the standard DDP gradient synchronization is insufficient because we need per-reward gradients \emph{before} combining them via gradient harmonization.

The synchronization protocol proceeds as follows for each training iteration:

\begin{algorithm}[h]
\caption{\ours{} DDP Synchronization}
\label{alg:ddp}
\begin{algorithmic}[1]
\REQUIRE Model with DDP wrapper, $K$ per-reward losses $\{\ell_k\}$
\STATE Initialize gradient storage: $G = [\;]$ \hfill \textit{// list of $K$ flattened gradient vectors}
\FOR{$k = 1$ to $K$}
    \STATE \textbf{with} model.\texttt{no\_sync()}: \hfill \textit{// suppress DDP AllReduce}
    \STATE \quad $\ell_k$.\texttt{backward(retain\_graph=(k < K))}
    \STATE \quad $g_k \leftarrow$ \texttt{flatten\_grads(model)} \hfill \textit{// extract and flatten trainable .grad}
    \STATE \quad \texttt{zero\_grads(model)}
    \STATE \quad $G$.\texttt{append}($g_k$)
\ENDFOR
\FOR{$k = 1$ to $K$}
    \STATE \texttt{dist.all\_reduce}($G[k]$, op=\texttt{AVG}) \hfill \textit{// synchronize across ranks}
\ENDFOR
\STATE All ranks now have identical $\{g_k\}_{k=1}^K$
\STATE Solve harmonization locally $\to$ identical $\alpha^*$ on every rank
\STATE $d^* \leftarrow \sum_k \alpha_k^* \hat{g}_k$
\STATE \texttt{unflatten\_to\_grad}($d^*$, model) \hfill \textit{// restore to parameter .grad}
\STATE Optimizer step proceeds as normal
\end{algorithmic}
\end{algorithm}

\paragraph{Why \texttt{no\_sync()} is necessary.}
Without \texttt{no\_sync()}, DDP would trigger an \texttt{AllReduce} on \emph{every} backward call. Since we call backward $K$ times (once per reward), this would: (1) average gradients prematurely before we can extract per-reward gradients, and (2) incur $K$ unnecessary collective operations. By wrapping each backward in \texttt{no\_sync()}, we defer synchronization to our explicit \texttt{all\_reduce} calls, where we synchronize the already-extracted per-reward gradient vectors.

\paragraph{\texttt{retain\_graph} handling.}
The first $K{-}1$ backward passes use \texttt{retain\_graph=True} because all per-reward losses share the same forward computation graph. The last backward pass uses \texttt{retain\_graph=False} to release the computation graph and free memory.

\section{Future Work}
\label{app:future_work}

\paragraph{Scalability to More Rewards.}
This work studies reward balancing with five reward dimensions, while scaling to a larger and more diverse set of rewards remains an important future direction. In future work, we aim to further investigate the scalability of \ours{} and develop more effective balancing strategies for handling increasingly diverse and potentially conflicting reward signals.

\paragraph{Extension to Video and World Models.}
Extending \ours{} to video generation is another promising direction, especially in the context of generative world modeling. Video models require jointly optimizing a broader set of objectives, including visual fidelity, temporal consistency, motion realism, and physical plausibility. These heterogeneous objectives are central to world models, which require not only high-quality generation but also coherent dynamics and plausible long-horizon evolution. We believe that \ours{} can serve as a promising optimization strategy for addressing these tasks.

\begin{figure}[ht]
    \centering
    \includegraphics[width=.9\textwidth]{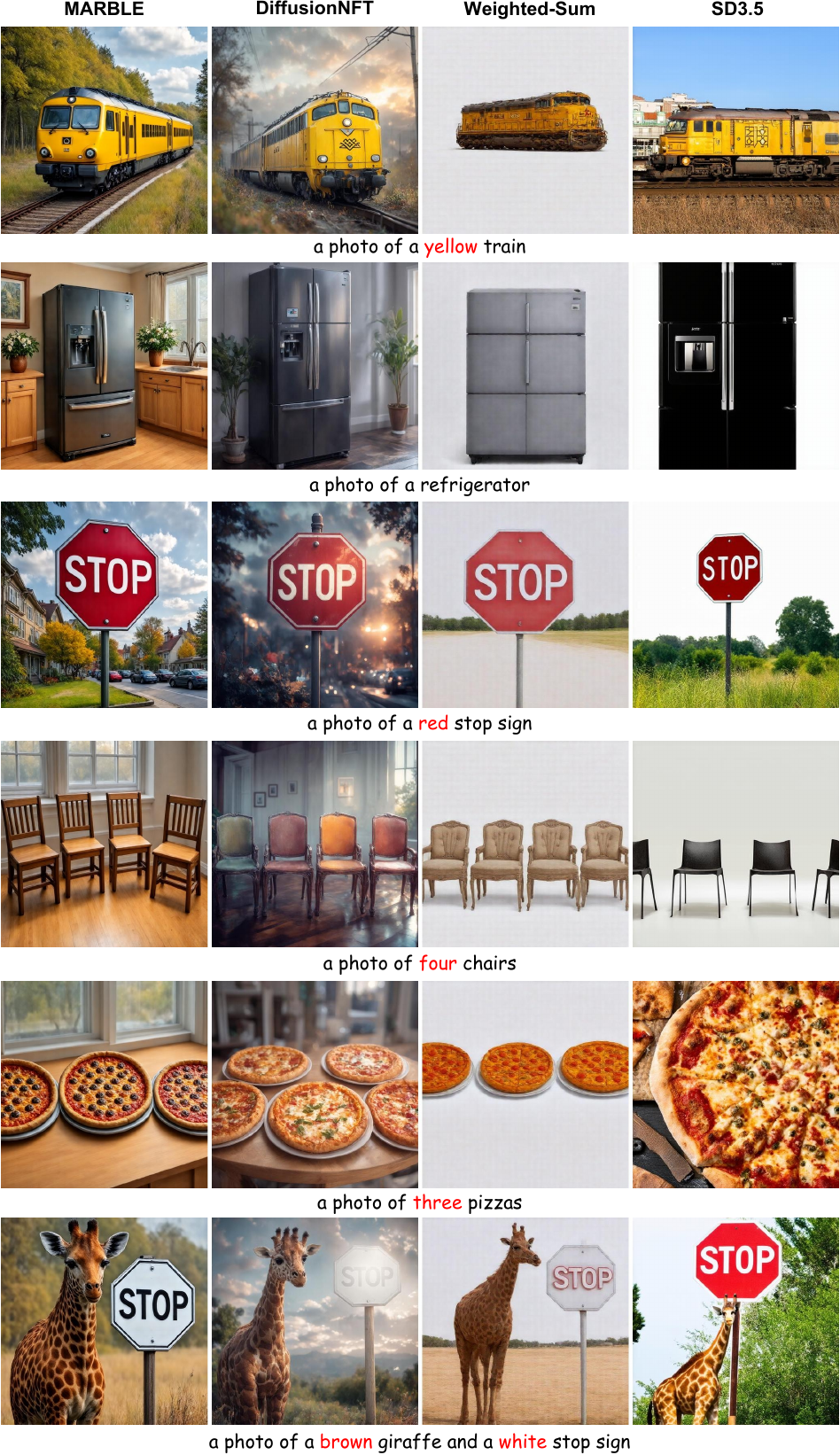}
    \caption{
    Additional qualitative comparisons.
    }
    \label{fig:supp1}
\end{figure}

\begin{figure}[ht]
    \centering
    \includegraphics[width=.9\textwidth]{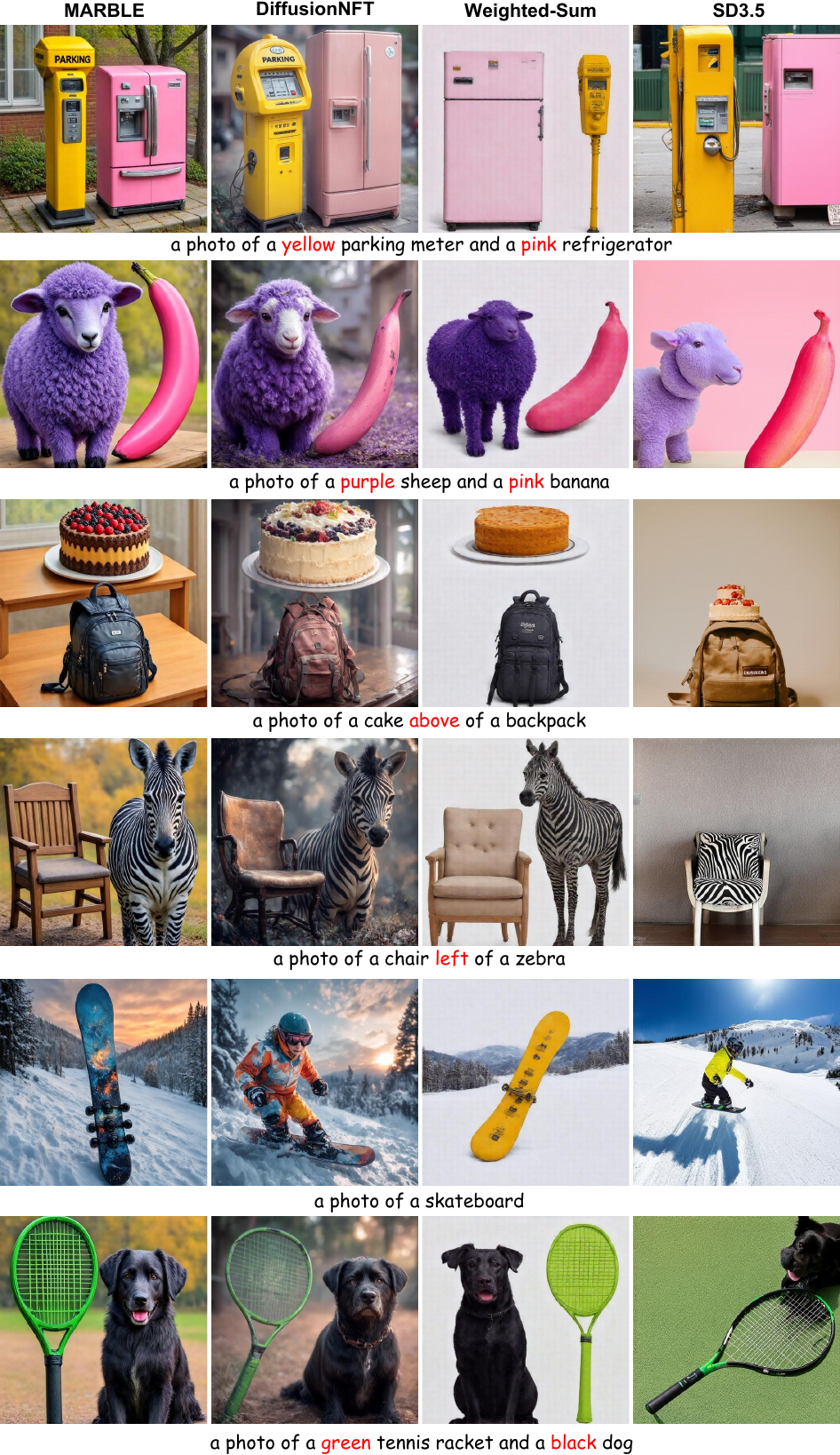}
    \caption{
    Additional qualitative comparisons.
    }
    \label{fig:supp2}
\end{figure}

\begin{figure}[ht]
    \centering
    \includegraphics[width=.9\textwidth]{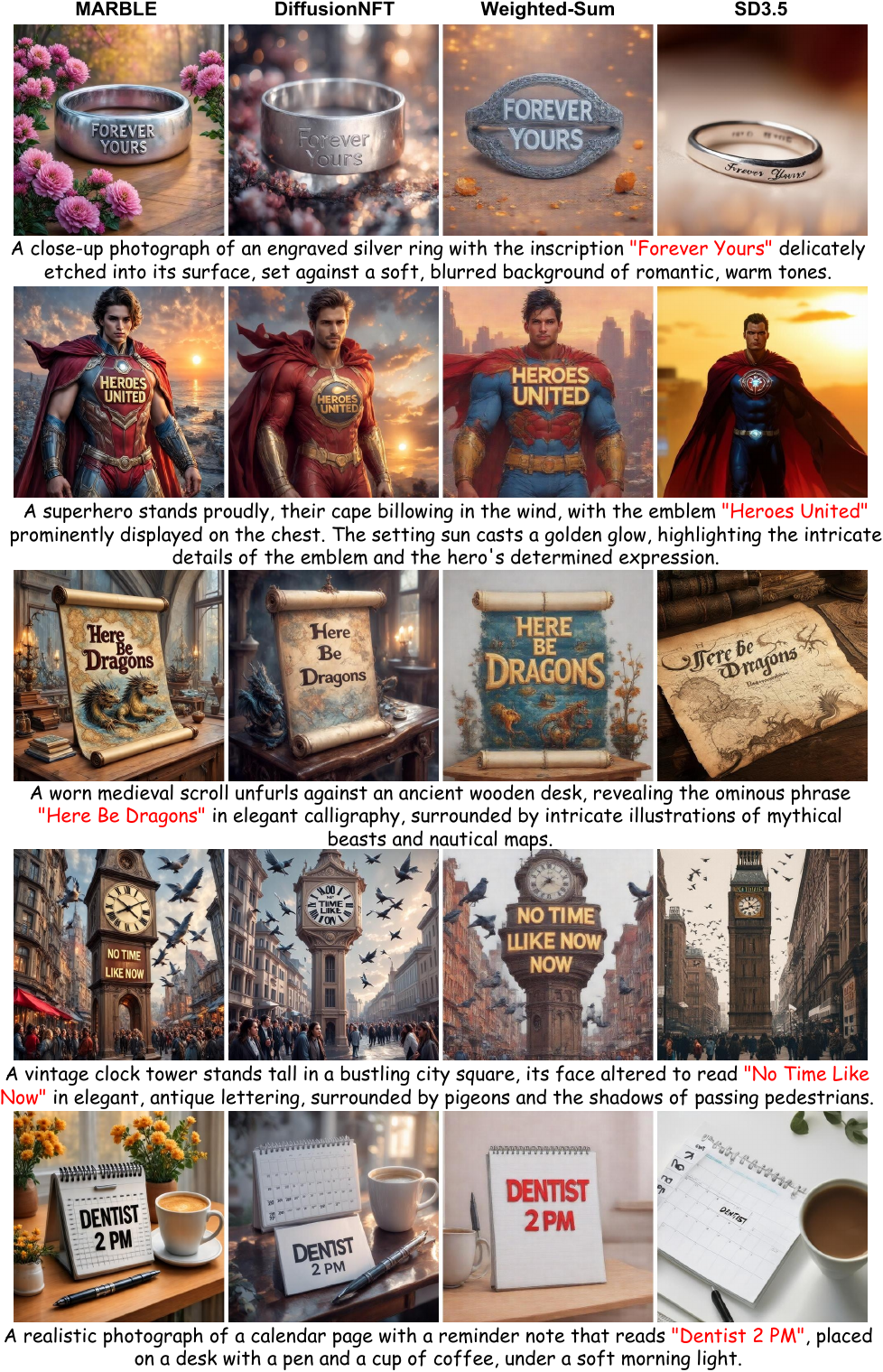}
    \caption{
    Additional qualitative comparisons.
    }
    \label{fig:supp3}
\end{figure}

\begin{figure}[ht]
    \centering
    \includegraphics[width=.85\textwidth]{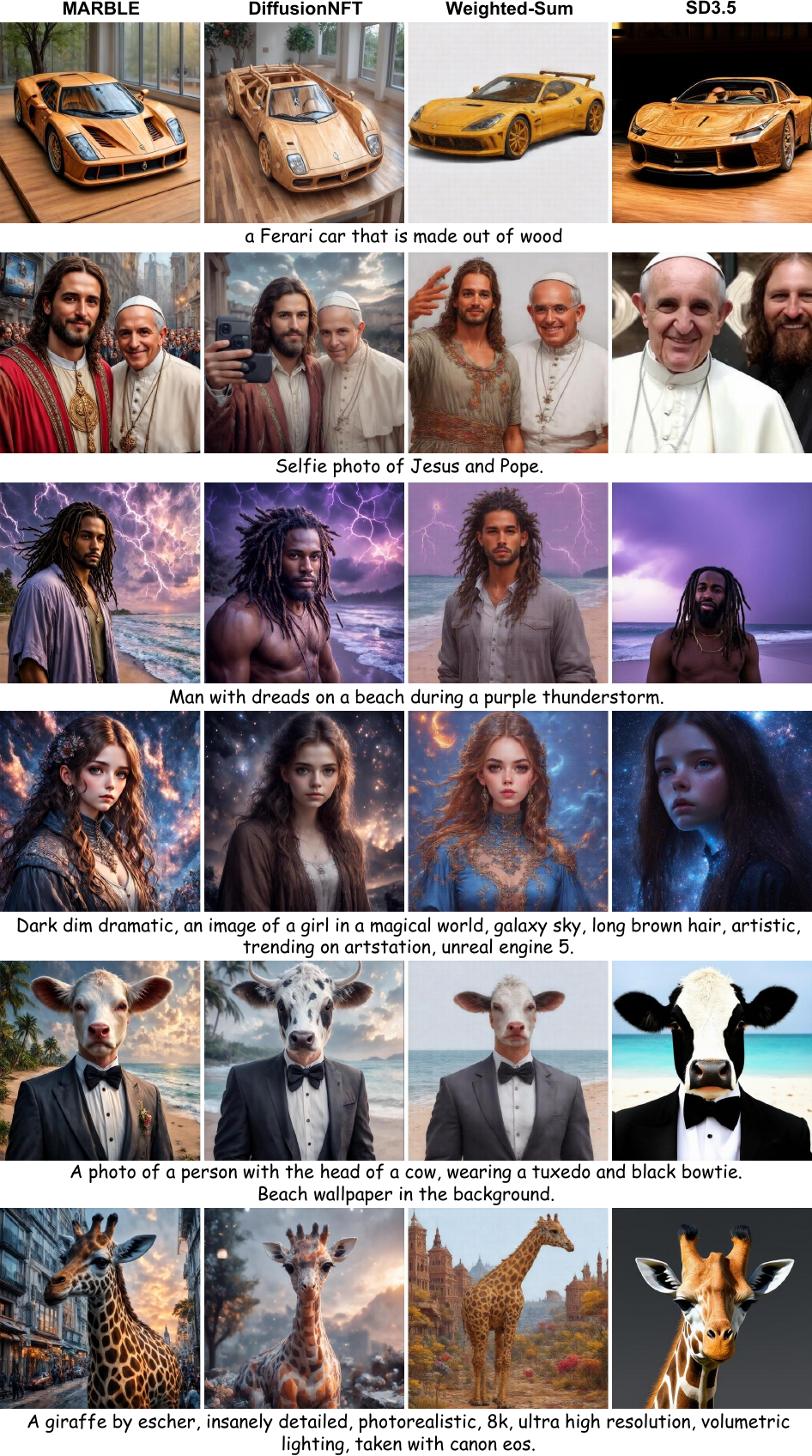}
    \caption{
    Additional qualitative comparisons.
    }
    \label{fig:supp4}
\end{figure}

\end{document}